\documentclass[preprint,12pt,authoryear]{elsarticle}

\usepackage{amssymb}
\usepackage{amsthm}
\usepackage[fleqn]{amsmath}
\usepackage{mathtools}
\usepackage{mathrsfs}
\usepackage{graphicx}
\usepackage{subcaption}
\usepackage[space]{grffile}
\usepackage{url}
\usepackage{booktabs}
\usepackage{multirow}
\usepackage{array}
\usepackage{enumitem}
\usepackage[ruled, vlined, linesnumbered]{algorithm2e}
\usepackage{placeins}
\usepackage{float}
\newtheorem{proposition}{Proposition}

\usepackage[pagewise, mathlines]{lineno}

\usepackage[colorlinks]{hyperref}
\hypersetup{
  citecolor = {blue}
}

\usepackage{geometry}
\begin{document}

\begin{frontmatter}

\title{Hierarchical Reinforcement Learning for Neural Network Compression (HiReLC): Pruning and Quantization}

\author[ensia]{Kamar Hibatallah Baghdadi\corref{cor1}}
\ead{kamar.baghdadi@ensia.edu.dz}

\author[ensia]{Kawther Guoual Belhamidi}

\author[ensia]{Sara Belhadj}

\author[ensia]{Aissa Boulmerka}

\author[uge]{Nadir Farhi}

\cortext[cor1]{Corresponding author.}

\address[ensia]{National School of Artificial Intelligence (ENSIA), Sidi Abdellah Campus, Algiers, Algeria}

\address[uge]{Cosys-Grettia, Univ Gustave Eiffel, F-77454 Marne-la-Vallee, France.}

\begin{abstract}
We present \textbf{HiReLC}, a hierarchical ensemble-reinforcement learning framework
for automated joint quantization and structured pruning of deep neural
networks. The framework decomposes the compression search across two levels of
abstraction: low-level agents~(LLAs) operate independently per block, selecting
per-kernel configurations over a multi-discrete action space spanning bitwidth, pruning
keep-ratio, quantization type, and granularity, while high-level agents~(HLAs)
coordinate global budget allocation via ensemble voting guided by Fisher
Information--based sensitivity estimates. To mitigate the computational cost of policy
evaluation, an iterative active learning loop interleaves surrogate-guided RL
optimization with post-compression fine-tuning, using a lightweight MLP surrogate to
amortize expensive evaluations and a logit-MSE proxy during cold-start. The surrogate is
used for reward shaping rather than as a replacement for final post-compression
evaluation. The controller is architecture-agnostic by design, with a modular layer
abstraction decoupling the RL environment from the underlying network topology. Experiments across Vision Transformer
and CNN benchmarks demonstrate effective parameter-storage compression ratios of
5.99--6.72$\times$ with a 3.83\% gain in one setting and 0.55--5.62\% accuracy drops
elsewhere, supporting hierarchical policy decomposition and sensitivity-aware guidance
as practical design choices for joint neural network compression.
\end{abstract}

\begin{keyword}
Hierarchical reinforcement learning \sep neural network compression structured pruning \sep mixed-precision quantization \sep Fisher Information sensitivity \sep ensemble agents, Vision Transformers
\end{keyword}

\end{frontmatter}


\section{Introduction}
Deep neural networks have achieved remarkable performance across visual recognition 
tasks, yet their deployment on resource-constrained platforms remains challenged by 
high parameter counts, memory footprints, and inference latencies. Vision 
Transformers~(ViTs) exemplify this challenge in modern vision systems 
\citep{dosovitskiy2021,islam2022visiontransformer}, and the efficiency gap is pervasive 
across architectural families, motivating the need for general-purpose compression 
frameworks.

Existing compression approaches, whether based on static heuristics, one-shot 
mixed-precision assignment, or single-objective optimization, fail to capture 
cross-layer dependencies and lack adaptive refinement during fine-tuning. Model 
compression techniques spanning pruning and quantization have been explored broadly 
\citep{he2018amc,wang2019haq,bondarenko-2021-transformer-quantization}, but reinforcement 
learning methods still face high variance in large action spaces, coarse granularity, 
and limited sensitivity-aware guidance.

We note a related but orthogonal line of work that compresses reinforcement-learning
agents themselves during training~\citep{graesser2022state,obandoceron2024pruned,vincent2025eaudeq};
HiReLC instead compresses pre-trained target networks as a post-training procedure.

To address these limitations, we propose \textbf{HiReLC}, an architecture-agnostic 
hierarchical ensemble-reinforcement learning framework for joint 
quantization and structured pruning. Low-level agents~(LLAs) operate per block over 
a multi-discrete action space, while high-level agents~(HLAs) coordinate global budget 
allocation via Fisher Information--based sensitivity estimates. An iterative active 
learning loop interleaves surrogate-guided RL optimization with post-compression
fine-tuning across an ensemble of heterogeneous RL policies.

The main contributions of this work are summarized as follows:
\begin{itemize}
    \item We propose \textbf{HiReLC}, an architecture-agnostic hierarchical ensemble-RL
 framework for \emph{joint} structured pruning and mixed-precision quantization.
 The framework decomposes the search across two tiers: a High-Level Agent
 (\textbf{HLA}) ensemble responsible for global budget allocation, and per-block
Low-Level Agent (\textbf{LLA}) ensembles operating over a four-dimensional action
space (bitwidth, keep-ratio, quantization type, granularity). Unlike single-level
controllers targeting a single compression axis~\citep{he2018amc,wang2019haq},
HiReLC achieves 5.99--6.72$\times$ effective parameter-storage compression across
four architectures, with 0.55--5.62\% drops in loss cases and a 3.83\% gain in one
setting.
  \item We integrate Fisher Information sensitivity scores~\citep{theis2018faster} into
both hierarchy levels: initializing LLA observations, contributing to the sensitivity
penalty $P_{\mathrm{sens}}$, and driving HLA budget corrections,
ensuring critical blocks receive lighter compression without additional search-time evaluations. \\
Prior RL methods rely on magnitude-based heuristics. Our sensitivity-aware design
propagates gradient-based block importance through both reward and action-correction,
improving trade-offs across ViTs~\citep{dosovitskiy2021} and
CNNs~\citep{he2016deep,sandler2018mobilenetv2}.
  \item We introduce a surrogate-augmented active learning loop interleaving RL optimization
with post-compression fine-tuning. A lightweight MLP surrogate amortizes expensive
evaluations once a minimum number of configurations have been evaluated
(see Section~\ref{sec:surrogate}); a logit-MSE proxy covers the cold-start phase,
while reported surrogate MAEs of 3.57--15.20\% quantify when the learned proxy is
reliable and when it remains noisy. \\
Unlike Hessian-based methods~\citep{dong2020hawqv2} that incur high computation costs,
our surrogate loop reduces the number of full model evaluations required during search,
enabling a practical framework-level study within a small number of cycles.
\end{itemize}


\section{Related Work}

Vision Transformers have achieved state-of-the-art performance across vision tasks, but their computational and memory demands pose significant deployment challenges \citep{dosovitskiy2021,islam2022visiontransformer}. Early efficiency efforts focused on token reduction or dynamic computation skipping \citep{rao2021dynamicvit,ryoo2021tokenlearner}, though these do not address parameter-level compression. Compressing ViTs via pruning or quantization is particularly challenging due to the sensitivity of self-attention mechanisms \citep{lu2019transformer-dynamics}, motivating adaptive, structure-aware strategies.

Post-training quantization often degrades Transformer performance at low bit-widths \citep{bondarenko-2021-transformer-quantization}, while quantization-aware training methods such as LSQ \citep{esser2020learnedstepsizequantization} better adapt to low-precision constraints. Structured pruning (removing attention heads \citep{michel2019heads}, feed-forward neurons \citep{wang2020structuredpruning}, or entire blocks) has become the dominant paradigm, as unstructured sparsity yields limited real-world speedups \citep{gale2019sparsity}.

Reinforcement learning has been explored for automating compression decisions. AMC \citep{he2018amc} and HAQ \citep{wang2019haq} formulate compression as sequential decision-making, using a DDPG agent to select layer-wise pruning ratios or mixed-precision assignments, demonstrating effectiveness on CNNs such as MobileNet and ResNet. However, both employ single-level controllers on a single compression axis. HAWQ-V2 \citep{dong2020hawqv2} uses Hessian trace information to guide mixed-precision quantization, achieving strong compression ratios at minimal accuracy cost, though at significant computational expense. I-ViT \citep{li2023ivit} demonstrates robust integer-only quantization for ViTs via QAT but does not address structural pruning. DeepCompress-ViT \citep{ahmed2025deepcompress} is orthogonal to structured pruning: it combines an encoder--decoder weight representation with low-bit quantization and unified compression training to target ViT storage and edge efficiency, rather than learning a structured channel-selection policy.

In contrast, our work introduces a hierarchical ensemble-RL framework that jointly optimizes quantization and structured pruning, incorporating sensitivity-aware priors, ensemble-based agents, and an active learning compression loop to overcome the limitations of static policies and unstable training under aggressive compression targets. The term ``ensemble'' is used deliberately: HiReLC trains independent PPO/A2C policies and combines their proposals by voting, rather than modeling cooperative or competitive multi-agent interaction with communication, centralized critics, or value factorization as in MARL methods such as MADDPG and QMIX~\citep{lowe2017multiagent,rashid2018qmix}.

\begin{table}[htbp]
    \caption{Notation summary for recurring symbols and table abbreviations. Retained-size fractions are smaller when compression is stronger; compression ratios are larger when compression is stronger.}
    \label{tab:notation}
    \begin{center}
        \begin{tabular}{p{0.25\linewidth}p{0.75\linewidth}}
            \toprule
            \textbf{Symbol / term} & \textbf{Meaning} \\
            \midrule
            $\mathcal{M}$, $\boldsymbol{\theta}$, $\boldsymbol{\theta}_0$, $\boldsymbol{\theta}'$, $\boldsymbol{\theta}^*$ & Model, original, fine-tuned baseline, compressed, and best parameters  \\
            $\mathbf{c}$, $c_{i,k}$ & Full and per-kernel compression configuration  \\
            $\rho_{i,k}$ & Keep-ratio: fraction of output channels retained  \\
            $\tau_{i,k}$ & Quantization type \\ 
            $\mu_{i,k}$ & Granularity \\
            $\nu_{i,k}$ & Kernel retained-size fraction $(b_{i,k}/b_0)\rho_{i,k}$ \\
            $\bar{\nu}_i$ & Block retained-size fractions \\
            $\bar{\nu}$ & Global retained-size fractions \\ 
            $\bar{\nu}^*$ & Equivalent retained-size target \\
            $CR$ & $1/\bar{\nu}$ \\
            $\mathcal{R}(\mathbf{c})$ & Effective parameter-storage compression ratio \\
            $R_{\mathrm{target}}$ & Target CR lower bound  \\
            $\ell_i$  & HLA compression tier \\
            $\sigma_i$ & decoded strategy \\
            $p_i$ & scalar strategy code \\
            $\mathcal{B}_i$ & HLA LayerBudget assigned to block $B_i$ \\
            $\Delta_{\mathrm{acc}}$, $\Delta\mathcal{A}^{(t)}$ & Allowed and measured accuracy drops \\
            $r_t^{\mathrm{LLA}}$, $r_t^{\mathrm{HLA}}$ & Tier-specific RL rewards \\
            $\mathcal{P}$, $\mathcal{S}$, $\mathcal{U}$, $\mathcal{T}$, $\mathcal{W}$ & MDP tuple, state/action spaces, transition, reward \\
            $\mathbf{s}$, $\mathbf{s}_t$, $\mathbf{a}_t$, $\mathbf{a}_h$ & HLA state, LLA state, LLA action, and HLA action \\
            $n_a$, $n_h$ & LLA/HLA ensemble sizes \\
            $w_j$ & LLA vote weight \\
            $\mathcal{X}_c$ & Calibration mini-batches \\ 
            $\mathbf{z}_{\mathrm{base}}$ & Cached baseline logits \\            
            $B_i$, $N$, $k$, $K_i$, $K_{\mathrm{tot}}$ & Block/index count, kernel/index count, and total kernels \\
            $b_{i,k}$, $b_{\min}$, $b_{\max}$, $b_0$ & Kernel bitwidth, bitwidth bounds, FP32 reference ($b_0=32$) \\
            $\rho_{\min}^{\mathrm{prune}}$, $\rho_{\max}^{\mathrm{prune}}$ & Minimum/maximum channel-removal fractions \\
            $P_{i,k}$, $P_i$, $P$ & Kernel, block, and global FP32 parameter counts \\
            MSR & Model-size reduction, $1-\bar{\nu}=1-1/\mathrm{CR}$ \\
            $R_i$, $\mathbf{R}$, $\bar{R}$ & HLA block target retained-size fraction, vector, and mean \\
            $\rho_{\max,i}$, $b_{\min,i}$, $\Delta_{\mathrm{acc},i}$ & Block-level pruning, bitwidth, and accuracy-drop limits \\
            $\mathcal{A}(\cdot)$, $\mathcal{A}_{\mathrm{base}}$, $\mathcal{A}^{(t)}$ & Validation, baseline, and cycle-$t$ accuracies \\
            $s_i$ & Fisher sensitivity score of block $B_i$ \\
            $\alpha,\beta,\gamma$; $\alpha_j,\beta_j,\gamma_j$ & Objective weights; agent-specific reward weights \\
            $\gamma_{\mathrm{RL}}$, $\rho_0$ & RL discount factor and initial state distribution \\
            $\delta_{b,i,k}$, $\delta_{\rho,i,k}$, $\delta_{\tau,i,k}$, $\delta_{\mu,i,k}$ & Discrete LLA action indices \\
            $\mathcal{B}_{\mathrm{rep}}$, $N_{\mathrm{rep}}^{\min}$, $\hat{f}$ & Surrogate replay buffer, activation threshold, MLP \\
            \bottomrule
        \end{tabular}
    \end{center}
\end{table}


\section{Methodology}

This section presents the proposed Hierarchical Reinforcement Learning (HiReLC)
framework for automated neural network compression. The framework is
architecture-agnostic at the controller level: the hierarchical agent topology, reward
formulations, ensemble voting mechanism, configuration-controlled quantization type and
granularity scheme, and active learning loop are invariant across architectures. Only
the layer abstraction module, sensitivity estimation pass, and downstream evaluation
loader are architecture- or dataset-specific.

Figure~\ref{fig:architecture_overview} summarizes the end-to-end pipeline. A
pre-trained model is parsed into compressible blocks, Fisher sensitivity is computed on a
small calibration subset, the HLA ensemble assigns per-block budgets, LLA ensembles refine
those budgets into per-kernel compression configurations, and the resulting compressed
model is fine-tuned before the replay buffer and surrogate are updated.

\subsection{Problem Formulation}

\subsubsection{Compression Function Definition}
\label{sec:compression_function}

Let $\mathcal{M}$ denote a pre-trained neural network with parameter vector
$\boldsymbol{\theta} \in \mathbb{R}^P$, consisting of $N$ compressible blocks
$\{B_i\}_{i=1}^N$, where block $B_i$ contains $K_i$ parametric kernels and
$K_{\mathrm{tot}}=\sum_{i=1}^{N}K_i$. Each block $B_i$ groups
logically associated \texttt{Linear}/\texttt{Conv2d} modules (e.g., QKV, attention
output, MLP projections), described by a \textit{KernelConfig} and aggregated into a
\textit{LayerConfig} that exposes per-kernel fields and summary statistics to higher-level
agents. We denote by $\boldsymbol{\theta}_0$ the uncompressed baseline after any
pre-compression fine-tuning or calibration stage; when no such stage is used,
$\boldsymbol{\theta}_0=\boldsymbol{\theta}$. A compression configuration is the joint assignment
$\mathbf{c} = \{c_{i,k}: i=1,\ldots,N;\; k=1,\ldots,K_i\}$, where each $c_{i,k}$ specifies the
compression decisions applied to kernel $k$ in block $B_i$. Formally, we define a
deterministic compression function:
\begin{equation}
    \mathcal{C}:\; \boldsymbol{\theta} \times \mathbf{c} \;\mapsto \; \boldsymbol{\theta}'
    \label{eq:compression_function}
\end{equation}
where $\boldsymbol{\theta}'$ is the parameter vector resulting from compression.\\
\noindent Each $c_{i,k}$ jointly specifies:
\begin{enumerate}[label=(\roman*)]
    \item the quantization bitwidth $b_{i,k} \in \{b_{\min}, \ldots, b_{\max}\}$;
    \item the structural pruning keep-ratio $\rho_{i,k} \in (0, 1]$, where $\rho_{i,k}$
          is the \emph{fraction of output channels retained};
    \item the quantization numeric type $\tau_{i,k} \in \{\mathrm{INT},\,\mathrm{FLOAT}\}$,
          selected per-kernel in the unconstrained regime, or fixed globally via a type
          override parameter (see Section~\ref{sec:lla}); and
    \item the granularity scheme
          $\mu_{i,k} \in \{\text{uniform},\,\text{log},\,\text{per-channel},\,\text{learned}\}$,
          selected per-kernel in the unconstrained regime, or fixed globally via a
          granularity override parameter (see Section~\ref{sec:lla}).
\end{enumerate}

\noindent The four granularity modes are operationally distinct: \textit{uniform} uses a
single per-tensor scale (max-abs), \textit{log} quantizes in the log domain with sign
restoration, \textit{per-channel} uses an independent scale per output channel, and
\textit{learned} applies per-channel \emph{sign-magnitude} quantization (1 sign bit and
$(b_{i,k}-1)$ magnitude bits), preserving near-zero weights at equal storage.

\noindent The optimization variable is the compression configuration $\mathbf{c}$, not
the deterministic compression operator $\mathcal{C}$. The goal is to identify the
optimal joint assignment $\mathbf{c}^*$ according to the multi-objective criterion
introduced next. The operator $\mathcal{C}$ produces the initial compressed weights;
the reported final parameters $\boldsymbol{\theta}^*$ are the best post-compression
fine-tuned weights initialized from $\mathcal{C}(\boldsymbol{\theta}_0,\mathbf{c}^*)$.

\begin{figure}[htbp]
    \centering
    \includegraphics[width=1\linewidth]{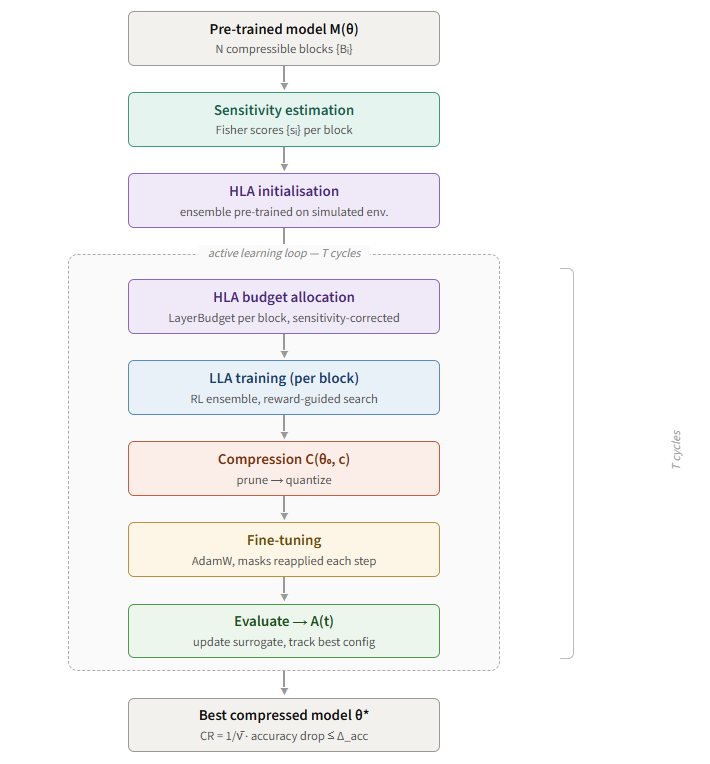}
    \caption{Overview of the HiReLC compression pipeline. Structural parsing and Fisher sensitivity estimation initialize the hierarchy; the HLA ensemble allocates layer budgets, LLA ensembles produce per-kernel configurations, and the active loop alternates compression, fine-tuning, and surrogate updates.}
    \label{fig:architecture_overview}
\end{figure}

\subsubsection{Multi-Objective Optimization}

\begin{equation}
    \mathbf{c}^* = \arg\max_{\mathbf{c}}\; \mathcal{F}(\mathbf{c}), \quad
    \mathcal{F}(\mathbf{c}) = \alpha \cdot
                              \mathcal{A}\!\left(\mathcal{C}(\boldsymbol{\theta}_0, \mathbf{c})\right)
                              + \beta  \cdot \mathcal{R}(\mathbf{c})
                              - \gamma \cdot \mathcal{V}(\mathbf{c})
    \label{eq:optimization_objective}
\end{equation}

\noindent Here $\mathcal{A}(\boldsymbol{\vartheta})$ denotes the top-1 validation
accuracy of the network instantiated with parameter vector $\boldsymbol{\vartheta}$.
Thus, $\mathcal{A}(\boldsymbol{\theta}_0)$ is the baseline accuracy before compression, while
$\mathcal{A}\!\left(\mathcal{C}(\boldsymbol{\theta}_0, \mathbf{c})\right)$ is the
accuracy after applying configuration $\mathbf{c}$.

\noindent subject to:

\begin{equation}
    \mathcal{A}(\boldsymbol{\theta}_0) -
    \mathcal{A}\!\left(\mathcal{C}(\boldsymbol{\theta}_0, \mathbf{c})\right)
    \leq \Delta_{\mathrm{acc}},
    \qquad
    \mathcal{R}(\mathbf{c}) \geq R_{\mathrm{target}}
    \label{eq:constraints}
\end{equation}

\noindent where $\mathcal{R}(\mathbf{c})$ is the global effective parameter-storage
compression ratio (reported as CR; Eq.~\eqref{eq:global_cr}); $\mathcal{V}(\mathbf{c})$
is a variance-based stability regularizer (std.\ of the LLA reward over the last five
steps); and $\{\alpha,\beta,\gamma\}$ are scalar trade-off weights.
$\Delta_{\mathrm{acc}} > 0$ is the allowed accuracy drop. $R_{\mathrm{target}} \geq 1$
is a target lower bound on the global compression ratio, not a retained-size fraction.
It is related to the global target retained-size fraction $\bar{\nu}^* \in (0,1)$ by
$R_{\mathrm{target}} = 1/\bar{\nu}^*$; e.g., $\bar{\nu}^* = 0.25$ implies
$R_{\mathrm{target}} = 4$, so the compressed model should retain at most $25\%$ of the
FP32 parameter storage. The configuration space scales as
$\mathcal{O}(|b|^{K_{\mathrm{tot}}} \cdot |\rho|^{K_{\mathrm{tot}}}
\cdot |\tau|^{K_{\mathrm{tot}}} \cdot |\mu|^{K_{\mathrm{tot}}})$
\cite{han2016deepcompression, he2018amc, wang2019haq},
motivating the hierarchical RL decomposition in Section~\ref{sec:hierarchical_agents}.

\subsubsection{Markov Decision Process Formulation}
\label{sec:mdp}

Each MDP instance is associated with one of two agent tiers: the \textbf{Low-Level
Agent} (\textbf{LLA}), governing the compression policy of an individual block $B_i$,
and the \textbf{High-Level Agent} (\textbf{HLA}), operating at the network level and
allocating compression budgets across all $N$ blocks.
\begin{equation}
    \mathcal{P} = \langle\, \mathcal{S},\; \mathcal{U},\; \mathcal{T},\;
                             \mathcal{W},\; \gamma_{\mathrm{RL}},\; \rho_0 \,\rangle
    \label{eq:mdp_tuple}
\end{equation}

\noindent Here, $\mathcal{S}$ is the state/observation space, $\mathcal{U}$ is the
action space, $\mathcal{T}$ is the transition rule induced by applying compression
actions and updating the environment state, $\mathcal{W}$ is the tier-specific reward
function, $\gamma_{\mathrm{RL}}$ is the RL discount factor, and $\rho_0$ is the initial
state distribution. Throughout this subsection, $R_i \in (0,1]$ denotes the
HLA-assigned target retained-size fraction for block $B_i$ (smaller values imply stronger
compression), not the global compression-ratio target $R_{\mathrm{target}}$. The
realized block retained-size fraction after an LLA action is $\bar{\nu}_i$, with
$\mathbf{R}=(R_1,\ldots,R_N)$, $P_i=\sum_{k=1}^{K_i}P_{i,k}$, and $P=\sum_i P_i$.
The global proposed target and realized retained-size fractions are parameter-weighted:
$\bar{R}=P^{-1}\sum_i P_iR_i$ and $\bar{\nu}=P^{-1}\sum_i P_i\bar{\nu}_i$.

The MDP formulation is preferable to a one-step bandit abstraction for three reasons.
First, HLA actions are globally coupled: the budget vector $\mathbf{R}$ must jointly
allocate per-block targets while keeping the mean target $\bar{R}$, and ultimately the
realized mean retained-size fraction $\bar{\nu}$, close to $\bar{\nu}^*$ rather than optimizing
each block independently. Second, the HLA state contains previous-cycle feedback
$(\Delta\mathcal{A}^{(t-1)},\bar{\nu}^{(t-1)})$, where
$\Delta\mathcal{A}^{(t-1)}$ is the measured accuracy drop in cycle $t-1$ and
$\Delta_{\mathrm{acc}}$ is the allowed accuracy-drop threshold. For the first cycle,
we set $\Delta\mathcal{A}^{(0)}=0$ and $\bar{\nu}^{(0)}=1$, corresponding to the
uncompressed baseline. Thus, current budget choices influence the feasible LLA search
and the next cycle's state. Third, LLA
episodes are short but sequential: each step applies a candidate configuration, updates
the realized block retained-size fraction $\bar{\nu}_i$ and hence the budget deviation
$\bar{\nu}_i-R_i$, and contributes to a stability term over a recent reward window.
This temporal credit-assignment structure follows the standard MDP view of sequential
decision making~\citep{sutton2018reinforcement}.

Figure~\ref{fig:reward_flow} shows how this reward signal is generated during one RL
interaction. The action first determines pruning and quantization choices, the temporary
compressed model is scored by the surrogate or logit-MSE proxy, and the resulting reward
combines accuracy retention, compression, budget compliance, stability, and sensitivity.

\begin{figure}[t]
    \centering
    \includegraphics[width=0.9\linewidth]{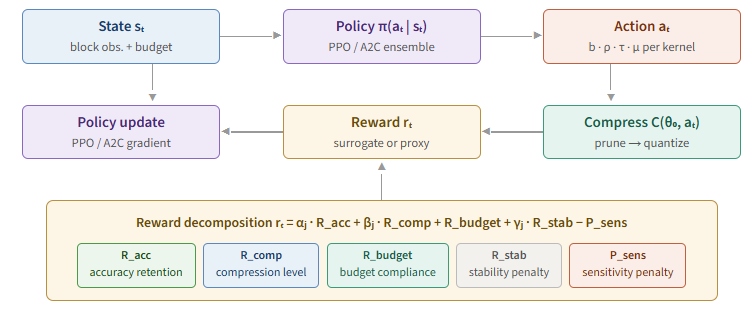}
    \caption{Reward-flow schematic for the RL compression environment. Candidate actions are decoded into compression primitives, evaluated with either the surrogate or logit-MSE proxy, and mapped to the LLA/HLA rewards used for policy updates.}
    \label{fig:reward_flow}
\end{figure}

\paragraph{State Space $\mathcal{S}$}

The index $i \in \{1,\ldots,N\}$ denotes the compressible block currently controlled
by an LLA, after parsing the network into blocks $\{B_i\}_{i=1}^N$. The index
$k \in \{1,\ldots,K_i\}$ denotes a kernel inside that block. Thus, $i/N$ gives the LLA
a normalized indication of the block position in the network, while the remaining state
entries describe the current budget, compression, and reward context for that block.

The \textbf{LLA} observation vector for block $B_i$ has dimension $12 + K_i$:
\begin{itemize}[leftmargin=*]
    \item \textbf{[0]} normalized block index $i/N$; \textbf{[1]} Fisher sensitivity
    score $s_i$; \textbf{[2]} realized block retained-size fraction $\bar{\nu}_i$, initialized
    to $1.0$ before any compression action is applied.
    \item \textbf{[3]} budget deviation $\bar{\nu}_i - R_i$, initialized to $0.0$ at
    reset and recomputed after each candidate action; \textbf{[4]} estimated accuracy
    drop divided by $10$, obtained from the logit-MSE proxy or surrogate.
    \item \textbf{[5]} curriculum-stage indicator, fixed to $1.0$ in the reported
    experiments; \textbf{[6]} HLA-assigned target retained-size fraction $R_i$; \textbf{[7]}
    HLA strategy/preference code $p_i$; \textbf{[8]} block-level allowed drop
    $\Delta_{\mathrm{acc},i}/10$.
    \item \textbf{[9]} constant bias entry $1.0$; \textbf{[10]} surrogate-availability
    flag $\mathbf{1}\{|\mathcal{B}_{\mathrm{rep}}| \geq N_{\mathrm{rep}}^{\min}\}$;
    \textbf{[11]} current-step LLA reward signal $r_t^{\mathrm{LLA}}/100$ in post-transition states, set
    to zero at reset.
    \item \textbf{[12--$12+K_i-1$]} per-kernel occupancy indicators, initialized to
    $0.5$ and updated as the LLA evaluates candidate kernel configurations.
\end{itemize}

The \textbf{HLA} observation vector has dimension $5 + 2N$. At the start of an HLA
episode, its entries are:
\begin{itemize}[leftmargin=*]
    \item \textbf{[0]} previous measured accuracy drop $\Delta\mathcal{A}^{(t-1)}/10$,
    initialized to $0$ when $t=1$; \textbf{[1]} previous realized mean retained-size
    fraction $\bar{\nu}^{(t-1)}$, initialized to $1$ when $t=1$; \textbf{[2]} cycle progress.
    \item \textbf{[3]} global target retained-size fraction $\bar{\nu}^*$; \textbf{[4]} allowed
    accuracy drop $\Delta_{\mathrm{acc}}/10$.
    \item \textbf{[5--$5+2N-1$]} interleaved block descriptors $(s_i,R_i)$ for all
    blocks, so the HLA observes both sensitivity and the current target retained-size proposal
    for every block.
\end{itemize}
After an HLA action is decoded, the next HLA state reuses entries~3 and~4 for dynamic
feedback: entry~3 stores the current global compression gap
$|\bar{\nu}-\bar{\nu}^*|$, and entry~4 stores the normalized immediate reward signal.
This slot reuse is a compact state-encoding choice: before an action, these entries
provide the static global constraints; after the transition, they report how well the
current proposal satisfied those constraints without increasing the state dimension.

\paragraph{Action Space $\mathcal{U}$}
For one block $B_i$, the \textbf{LLA} action is a multi-discrete vector of dimension
$4K_i$ because the block contains $K_i$ kernels and the LLA chooses four discrete decisions
per kernel. Equivalently, the block-level action is the concatenation
$\mathbf{a}_i=(\mathbf{a}_{i,1},\ldots,\mathbf{a}_{i,K_i})$. Across the whole network,
the hierarchy therefore produces $N$ such block-level actions, one for each LLA-controlled
block. For each kernel $k \in \{1,\ldots,K_i\}$, the LLA selects:
\begin{equation}
    \mathbf{a}_{i,k} = \bigl(\delta_{b,i,k},\; \delta_{\rho,i,k},\;
    \delta_{\tau,i,k},\; \delta_{\mu,i,k}\bigr)
    \label{eq:lla_action}
\end{equation}
\noindent where $\delta_{b,i,k}, \delta_{\rho,i,k} \in \{0,\ldots,D\}$ with a shared
resolution $D=14$ for all blocks and kernels (i.e., $15$ discrete levels),
$\delta_{\tau,i,k} \in \{0,1\}$, and $\delta_{\mu,i,k} \in \{0,1,2,3\}$. The indices
are kernel-specific, whereas $D$ is global in the reported implementation. The bitwidth
and keep-ratio are decoded by interpolating between the active bounds for block $B_i$:
\begin{equation}
    b_{i,k} = \mathrm{clip}\!\left(
        b_{\min,i} + \left\lfloor \delta_{b,i,k} \cdot
        \frac{b_{\max} - b_{\min,i}}{D} \right\rfloor,\;
        b_{\min,i},\; b_{\max}
    \right)
    \label{eq:bits_decode}
\end{equation}
\begin{equation}
    \rho_{i,k} = \mathrm{clip}\!\left(
        1 - \left[
            \rho_{\min}^{\mathrm{prune}} + \frac{\delta_{\rho,i,k}}{D}
            \cdot \bigl(\rho_{\max,i} - \rho_{\min}^{\mathrm{prune}}\bigr)
        \right],\;
        1 - \rho_{\max,i},\;
        1 - \rho_{\min}^{\mathrm{prune}}
    \right)
    \label{eq:prune_decode}
\end{equation}
Here $b_{\max}$ and $\rho_{\min}^{\mathrm{prune}}$ are global bounds, while
$b_{\min,i}$ and $\rho_{\max,i}$ are the active block-level bounds supplied by the HLA
\textit{LayerBudget}; if no HLA restriction is used, they reduce to the global
$b_{\min}$ and $\rho_{\max}^{\mathrm{prune}}$. Thus the integer action indices
$\delta_{b,i,k}$ and $\delta_{\rho,i,k}$ are indexed by both block and kernel, whereas
the discretization resolution $D$ is shared.

For the quantization type $\tau_{i,k}$ and granularity $\mu_{i,k}$, the agent operates
in a general per-kernel selection regime. The type index $\delta_{\tau,i,k}$ selects
between INT and FLOAT, while the granularity index $\delta_{\mu,i,k}$ selects among
uniform, log-scale, per-channel, and learned quantization. Global override parameters
may restrict this freedom: when a type override $\tau^*$ is specified,
$\tau_{i,k}=\tau^*$ for all $(i,k)$ and $\delta_{\tau,i,k}$ is disregarded; when a
strategy override $\mu^*$ is specified, $\mu_{i,k}=\mu^*$ globally and
$\delta_{\mu,i,k}$ is disregarded. Without overrides, the agent selects both variables
independently per kernel.

The \textbf{HLA} action is a multi-discrete vector of dimension $2N$. For each block,
it emits a compression-level index $\ell_i$ and a strategy index $\sigma_i$:
\begin{equation}
    \ell_i \in \{0,\ldots,4\}, \qquad
    \sigma_i \in \{\text{quantization},\,\text{pruning},\,\text{auto}\}.
\end{equation}
The five compression levels map to target retained-size fractions $0.35$, $0.30$, $0.25$,
$0.20$, and $0.15$, corresponding approximately to $2.9\times$, $3.3\times$,
$4\times$, $5\times$, and $6.7\times$ compression. The scalar strategy code $p_i$
stored in the LLA observation is the numeric encoding of $\sigma_i$.
The same decoded HLA action forms the block-level \textit{LayerBudget} fields
$(R_i,\rho_{\max,i},b_{\min,i},p_i,\Delta_{\mathrm{acc},i})$: $R_i$ is read directly
from the compression tier, while $\rho_{\max,i}$, $b_{\min,i}$, and
$\Delta_{\mathrm{acc},i}$ are deterministic tier/strategy-dependent bounds used to
restrict the downstream LLA search. Thus the HLA action remains $2N$-dimensional, and
the additional numerical fields are decoded metadata rather than independent action
coordinates.
Sensitivity-aware corrections apply deterministically post-decode: blocks with
$s_i > 0.70$ have their compression level decremented by one tier (lighter budget);
blocks with $s_i < 0.30$ may have it incremented (more aggressive budget).

\paragraph{Transition Function}
In the LLA environment, executing action $\mathbf{a}_t$ applies the decoded compression
configuration via $\mathcal{C}$, after which the next state is constructed by
re-computing the compression ratio, target deviation, and estimated accuracy drop, with
original weights restored from a cached reference copy. In the HLA environment, the
transition updates per-block allocated targets and reflects current global
accuracy--compression feedback. The transition is deterministic given a fixed model
state and action; stochasticity arises solely from the RL policy.

\paragraph{Reward Functions}
For the \textbf{LLA}:
\begin{equation}
    r_t^{\mathrm{LLA}} = \alpha_j \cdot R_{\mathrm{acc}}
        + \beta_j  \cdot R_{\mathrm{comp}}
        + R_{\mathrm{budget}}
        + \gamma_j \cdot R_{\mathrm{stab}}
        - P_{\mathrm{sens}}
    \label{eq:lla_reward}
\end{equation}
where $R_{\mathrm{acc}}$ is a piecewise-linear accuracy retention term (via
surrogate or logit-MSE proxy); $R_{\mathrm{comp}} = 60\cdot(1 - \bar{\nu}_i)$ rewards
higher compression; $R_{\mathrm{budget}} = 20\cdot\exp(-10\cdot|\bar{\nu}_i - R_i|)$
is an exponential compliance term; $R_{\mathrm{stab}} = -2\cdot\mathrm{std}(\mathcal{H}_{t-1})$
penalizes instability using the previous reward window
$\mathcal{H}_{t-1}=\{r_{t-h}^{\mathrm{LLA}}:h=1,\ldots,\min(5,t)\}$, with
$R_{\mathrm{stab}}=0$ when the history is empty; and $P_{\mathrm{sens}}$ is defined in
Eq.~\eqref{eq:psens}.

The per-agent weights $(\alpha_j, \beta_j, \gamma_j)$ are configurable hyperparameters
of the ensemble: $\alpha_j$ controls the relative emphasis on accuracy retention,
$\beta_j$ controls compression aggressiveness, and $\gamma_j$ controls stability
regularization. The ensemble is deliberately configured with diverse $\beta_j$ values so
that the weighted vote in Eq.~\eqref{eq:voting} integrates signals from a conservative
regime, a moderate regime, and an aggressive regime.
\begin{equation}
    P_{\mathrm{sens}} = 12 \cdot s_i \cdot (1 - \bar{\nu}_i),
    \label{eq:psens}
\end{equation}
where $s_i \in [0,1]$ is the Fisher sensitivity score of block $B_i$ and
$\bar{\nu}_i$ is the parameter-weighted mean normalized retained-size fraction across its kernels.

For the \textbf{HLA}:
\begin{equation}
    r_t^{\mathrm{HLA}} = R_{\mathrm{bal}} + R_{\mathrm{sens}} + R_{\mathrm{global}} + R_{\mathrm{dyn}}
    \label{eq:hla_reward}
\end{equation}
where $R_{\mathrm{bal}} = -10\cdot\mathrm{Var}(\{R_i\}_{i=1}^N)$ penalizes
high unweighted heterogeneity in per-block target retained-size fractions;
$R_{\mathrm{sens}}$ encourages sensitivity-aware budget allocation: it rewards
assigning a conservative target ($R_i > 0.3$) to high-sensitivity blocks ($s_i > 0.7$)
with a higher weight than the low-sensitivity case, proportional to
$(R_i-0.3)\times 10$, and independently rewards assigning an aggressive target
($R_i < 0.2$) to low-sensitivity blocks ($s_i < 0.3$), proportional to
$(0.2-R_i)\times 5$; both cases contribute positively to the total reward signal;
$R_{\mathrm{global}} = -50\cdot|\bar{R} - \bar{\nu}^*|$ penalizes deviation of the
mean proposed target from the global target retained-size fraction; and $R_{\mathrm{dyn}}$ is a
cycle-feedback correction term, defined in terms of the previous-cycle mean retained-size fraction
$\bar{\nu}^{(t-1)}$ and accuracy drop $\Delta\mathcal{A}^{(t-1)}$:
\begin{equation}
    R_{\mathrm{dyn}} = \begin{cases}
        -50\cdot\bigl(\Delta\mathcal{A}^{(t-1)} - \Delta_{\mathrm{acc}}\bigr)
        & \text{if } \Delta\mathcal{A}^{(t-1)} > \Delta_{\mathrm{acc}}
          \text{ and } \bar{R} < \bar{\nu}^{(t-1)} \\
        +20
        & \text{if } \Delta\mathcal{A}^{(t-1)} > \Delta_{\mathrm{acc}}
          \text{ and } \bar{R} \geq \bar{\nu}^{(t-1)} \\
        +20
        & \text{if } \bar{\nu}^{(t-1)} > \bar{\nu}^* \text{ and }
          \bar{R} < \bar{\nu}^{(t-1)} \\
        0 & \text{otherwise}
    \end{cases}
    \label{eq:rdyn}
\end{equation}
where $\bar{R}=P^{-1}\sum_i P_iR_i$ is the parameter-weighted mean proposed target
retained-size fraction. The first case penalizes proposing yet-more-aggressive targets after accuracy
has already exceeded the budget; the second rewards relaxing the target in that case;
the third rewards increasing aggressiveness when the previous cycle retained more
storage than the target while still respecting the accuracy budget.

The reward components operate on compatible empirical scales ($R_{\mathrm{acc}}
\in [0,100]$, $R_{\mathrm{comp}} \in [0,60)$, $R_{\mathrm{budget}} \in (0,20]$,
$P_{\mathrm{sens}} \in [0,12)$, $R_{\mathrm{stab}} \leq 0$). Episodes are short
(20 LLA steps, 10 HLA steps) and $\gamma_{\mathrm{RL}}$ is set near 1. The RL
objective at each level is identical in form, but the reward symbol is tier-indexed:
$r_t^{\ell}$ denotes the reward at step $t$ for tier $\ell \in
\{\mathrm{LLA},\mathrm{HLA}\}$, with $r_t^{\mathrm{LLA}}$ defined in
Eq.~\eqref{eq:lla_reward} and $r_t^{\mathrm{HLA}}$ defined in
Eq.~\eqref{eq:hla_reward}:
\begin{equation}
    \pi^* = \arg\max_{\pi} \;\;
    \mathbb{E}_{\pi}\!\left[\, \sum_{t=0}^{T} \gamma_{\mathrm{RL}}^{\,t} \cdot r_t^{\ell} \,\right]
    \label{eq:rl_objective}
\end{equation}

Maximizing Eq.~\eqref{eq:rl_objective} is an RL relaxation of the constrained objective
in Eq.~\eqref{eq:optimization_objective}, not a guarantee of the global optimum of the
integer compression program. In the LLA, $R_{\mathrm{acc}}$ tracks accuracy retention and
$R_{\mathrm{comp}}$ tracks storage reduction, while $R_{\mathrm{budget}}$ constrains the
block to the HLA-assigned target $R_i$. In the HLA, $R_{\mathrm{global}}$ directly
penalizes deviation from the global target retained-size fraction $\bar{\nu}^*$. This mirrors
other combinatorial optimization settings where policy gradients provide a practical
search heuristic for large discrete spaces~\citep{bello2016neural}.

The stability reward uses a rolling window from the environment's internal reward
history. In principle this can be made exactly Markovian by augmenting the state with the
full reward window; HiReLC instead uses a compact state containing the current normalized
reward signal, avoiding additional sample complexity while retaining a stabilizing bias
toward smooth decisions~\citep{nath2021revisiting,hessel2019inductive}.

\subsection{Architecture-Agnostic Layer Abstraction}

Each model is parsed into an ordered list of compressible blocks via a layer abstraction
module. A block $B_i$ is defined as a contiguous group of $K_i$ parametric
\texttt{Linear} or \texttt{Conv2d} modules that are logically associated (e.g., the QKV
projection, attention output projection, and MLP feed-forward projections in a
Transformer encoder layer; or the depthwise and pointwise convolutions within a
bottleneck residual unit). Each kernel is described by a \textit{KernelConfig} object
encoding the four compression decisions of Section~\ref{sec:compression_function}.
The aggregated block-level configuration, referred to as a \textit{LayerConfig},
maintains per-kernel fields alongside summary statistics (average bitwidth, average
keep-ratio, normalized retained-size fraction via Eq.~\eqref{eq:compression_ratio},
and effective compression ratio via Eq.~\eqref{eq:global_cr}) that serve as state
features for higher-level agents. This abstraction decouples the RL
environment from the underlying architecture, ensuring that the same agent interfaces,
reward signals, and training procedures apply to any target network.

\subsection{Layer Sensitivity Estimation}

A scalar sensitivity score $s_i \in [0,1]$ is computed for each block $B_i$ via
empirical Fisher information before RL training. For a mini-batch $\mathcal{D}_s$ of
$n_s$ samples:
\begin{equation}
    \hat{F}(\theta_j) = \frac{1}{n_s}
    \sum_{(\mathbf{x},\, y)\,\in\,\mathcal{D}_s}
    \left( \frac{\partial \mathcal{L}(\mathbf{x}, y;\, \boldsymbol{\theta})}
                {\partial \theta_j} \right)^{\!2}
    \label{eq:fisher}
\end{equation}
where $\mathcal{L}(\mathbf{x}, y;\,\boldsymbol{\theta})=\mathrm{CrossEntropy}(f_{\boldsymbol{\theta}}(\mathbf{x}),\,y)$ is the cross-entropy loss evaluated at the model output.

\noindent The block-level score is the sum over $B_i$'s parameters, max-normalized:
\begin{equation}
    s_i = \frac{\displaystyle\sum_{\theta_j \in B_i} \hat{F}(\theta_j)}
               {\displaystyle\max_{i'} \sum_{\theta_j \in B_{i'}} \hat{F}(\theta_j)}
    \label{eq:sensitivity_score}
\end{equation}

\noindent These scores $\{s_i\}$ serve two roles: (i) they initialize the LLA
observation vectors and enter the sensitivity penalty $P_{\mathrm{sens}}$
(Eq.~\eqref{eq:psens}), where a block with high $s_i$ incurs a larger penalty per
unit of compression applied; and (ii) they inform HLA budget allocation via the
sensitivity-aware action correction and the $R_{\mathrm{sens}}$ reward term.

\subsection{Hierarchical Agent Architecture}
\label{sec:hierarchical_agents}

The framework employs a two-tier hierarchical RL structure, motivated by the distinct
temporal and spatial scales of the compression decision process.

\begin{figure}[t]
    \centering
    \includegraphics[width=1\linewidth]{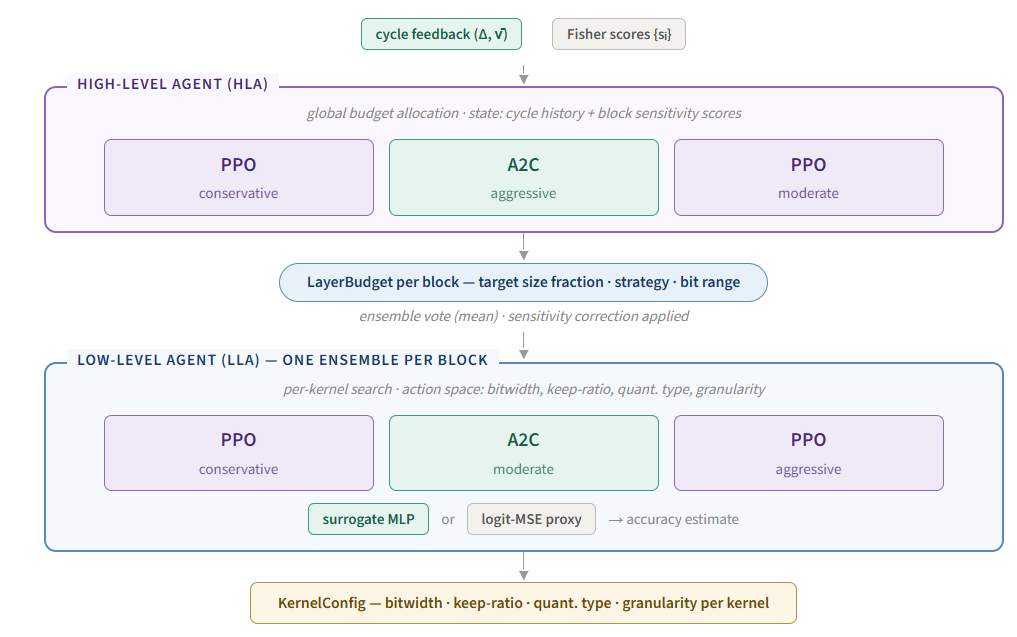}
    \caption{Two-tier controller used in HiReLC. The HLA ensemble proposes per-block \textit{LayerBudget} objects, LLA ensembles search within each budget over per-kernel actions, and the weighted vote emits the final \textit{KernelConfig} assignments.}
    \label{fig:hierarchical_controller}
\end{figure}

\subsubsection{Low-Level Agents (LLA)}
\label{sec:lla}

Each block $B_i$ is governed by an \textbf{Ensemble LLA} of $n_a$ parallel RL policies
combining PPO and A2C with gradient norm clipping~\citep{schulman2017proximal,mnih2016asynchronous}.
PPO is used for stable clipped policy updates under aggressive compression, while A2C
provides fast synchronous actor--critic exploration in the same multi-discrete action
space. At configuration extraction, a
uniform-weighted vote produces the consensus action:
\begin{equation}
    \mathbf{a}_i^* =
    \operatorname{round}\!\left(\, \sum_{j=1}^{n_a} w_j \cdot \mathbf{a}_i^{(j)} \,\right),
    \qquad w_j = 1/n_a
    \label{eq:voting}
\end{equation}

\noindent The objective is a robust consensus action obtained by averaging across $n_a$
agents trained with deliberately heterogeneous reward hyperparameters $(\alpha_j,
\beta_j, \gamma_j)$. By configuring the ensemble with $\beta_j$ values spanning conservative, moderate,
and aggressive compression emphases, the weighted vote blends diverse policies and reduces
the variance of the final compression decision relative to any single policy.

\paragraph{Agent Weight Policy}
Weights $\{w_j\}$ can be fixed uniformly ($w_j = 1/n_a$, adopted in our experiments),
set as tuned hyperparameters, or updated online via a confidence-based rule:
\begin{equation}
    w_j^{(t+1)} \;=\; \frac{\exp\!\left(\lambda \cdot \overline{r}_j^{(t)}\right)}
                            {\displaystyle\sum_{j'=1}^{n_a}
                             \exp\!\left(\lambda \cdot \overline{r}_{j'}^{(t)}\right)},
    \label{eq:weight_update}
\end{equation}
where $\overline{r}_j^{(t)} = \frac{1}{H}\sum_{h=1}^{H} r_{t-h+1}^{(j)}$
is the sliding-window average of the most recent $H$ rewards for agent $j$ and
$\lambda > 0$ is a temperature parameter. Evaluating this online update rule is left as
future work.

The consensus action is decoded into $(b_{i,k},\,\rho_{i,k},\,\tau_{i,k},\,\mu_{i,k})$
for all $K_i$ kernels in block $B_i$, subject to global type and granularity override parameters.

An ablation on DeiT-Small/CIFAR-100 confirms that this mixed PPO/A2C ensemble gives a
more favorable first-cycle trade-off than a homogeneous A2C ensemble: [PPO, A2C, PPO]
achieves 6.38$\times$ compression with 3.28\% accuracy drop, whereas [A2C, A2C, A2C]
achieves 6.63$\times$ but with a larger 7.19\% drop.

\subsubsection{High-Level Agents (HLA)}
\label{sec:hla}

The \textbf{Ensemble HLA} of $n_h$ agents (PPO and A2C) with heterogeneous reward
weights allocates a \textit{LayerBudget} $\mathcal{B}_i$ per block specifying $R_i$,
$\rho_{\max,i}$, $b_{\min,i}$, $\sigma_i$, $p_i$, and $\Delta_{\mathrm{acc},i}$.
Numerical budget fields are aggregated by arithmetic mean; the preferred strategy
$\sigma_i$ is the majority-vote mode of per-agent proposals. For cycles $t > 1$, HLA
environments receive real-world feedback $(\Delta\mathcal{A}^{(t-1)},
\bar{\nu}^{(t-1)})$ and the ensemble is re-trained for a short burst of additional
timesteps to adapt the budget allocation policy to observed LLA behavior. The final
cycle uses deterministic (greedy) HLA actions.

The hierarchy is essential rather than cosmetic: removing the HLA and using uniform
budgets leaves the LLA to over-compress, yielding 6.80\% accuracy drop at 8.87$\times$
CR on DeiT-Small/CIFAR-100, compared with 1.72\% at 6.23$\times$ for the full system.
This shows that HLA feedback primarily regulates budget feasibility, while the LLA
ensemble performs fine-grained per-kernel search.

\subsection{Compression Primitives}
\label{sec:compression_primitives}

\subsubsection{Structured Pruning}

For kernel $k$ in block $B_i$, let $\mathbf{W}$ denote its weight tensor and let
$C_{\mathrm{out}}$ be the number of output channels of that tensor (rows for a linear
layer and filters for a convolution). The keep-ratio $\rho_{i,k}\in(0,1]$ specifies the
fraction of output channels retained, so the number of retained channels is
$K_{\mathrm{keep}}=\lfloor\rho_{i,k}C_{\mathrm{out}}\rfloor$. We score each output
channel by an $\ell_2$-norm importance value $\eta_j$ and keep the
$K_{\mathrm{keep}}$ most important channels:
\begin{equation}
    \widetilde{\mathbf{W}}_{\mathrm{pruned}} = \mathbf{m} \odot \mathbf{W},
    \qquad
    m_j = \mathbf{1}\!\left[
        \operatorname{rank}_{\downarrow}(\eta_j) \leq
        \left\lfloor \rho_{i,k} \cdot C_{\mathrm{out}} \right\rfloor
    \right]
    \label{eq:pruning_mask}
\end{equation}
\noindent Here $\widetilde{\mathbf{W}}_{\mathrm{pruned}}$ is the masked weight tensor,
$\mathbf{m}$ is a binary channel mask broadcast over the corresponding weights,
$\odot$ denotes elementwise multiplication, $m_j\in\{0,1\}$ is the mask value for output
channel $j$, and $\operatorname{rank}_{\downarrow}(\eta_j)=1$ denotes the channel with
the largest importance score. For a convolutional kernel, for example,
$\eta_j=\|\mathbf{W}_{j,:,:,:}\|_2$; for a linear layer, $\eta_j$ is the norm of row
$j$. Intuitively, Eq.~\eqref{eq:pruning_mask} keeps the channels whose filters/rows have
the largest energy and zeros the remaining channels. A keep-ratio of $\rho_{i,k}=0.25$
therefore retains approximately the top quarter of output channels in kernel $(i,k)$.

\noindent Masks are re-applied after every gradient step during fine-tuning. In the
implementation used for the reported experiments, pruning is represented by fixed masks
for stable recovery training; the reported model-size reductions count the retained
channels under a packed structured representation. On hardware that does not physically
remove masked channels, the same configuration should be interpreted as a
storage-compression result rather than a guaranteed latency speedup. A deployment pass
could materialize the selected channel subsets into narrower tensors, but measured latency
is left outside the present objective.

\subsubsection{Mixed-Precision Quantization}

Pruning is applied first; quantization follows. For $\tau_{i,k} = \mathrm{INT}$
(symmetric), each scalar weight $w$ in the pruned kernel is quantized as:
\begin{equation}
\begin{aligned}
    \tilde{w} &=
    \mathrm{clip}\!\left(
        \operatorname{round}\!\left( \frac{w}{s_{i,k}} \right),\;
        q_{\min},\; q_{\max}
    \right) \cdot s_{i,k}, \\
    s_{i,k} &= \frac{\max(|\widetilde{\mathbf{W}}_{\mathrm{pruned}}|)}{q_{\max}},
    \qquad
    q_{\max} = 2^{b_{i,k}-1} - 1,
    \qquad
    q_{\min} = -q_{\max}.
\end{aligned}
    \label{eq:quantization}
\end{equation}
\noindent Here $\tilde{w}$ is the de-quantized value used by the compressed model,
$s_{i,k}$ is the quantization scale for kernel $(i,k)$, and
$[q_{\min},q_{\max}]$ is the signed integer range determined by the bitwidth
$b_{i,k}$. Equation~\eqref{eq:quantization} first maps $w$ to the nearest integer grid
point by dividing by $s_{i,k}$ and rounding, then clips the result to the representable
integer range, and finally multiplies by $s_{i,k}$ to return to the real-valued weight
scale. Lower bitwidths reduce $q_{\max}$ and therefore use a coarser grid, increasing
quantization error but reducing storage. The displayed scale is the per-tensor uniform
case; per-channel variants apply the same operation independently to each output channel
with a channel-specific scale.

\noindent For $\tau_{i,k} = \mathrm{FLOAT}$, asymmetric quantization is applied (scale
from the full $[\min,\max]$ range). Granularity $\mu_{i,k}$ selects among four
operationally distinct schemes: per-tensor uniform (\textit{uniform}); log-domain
per-tensor, where weights are log-transformed, uniformly quantized, then exponentiated
back with sign restoration (\textit{log}); per-output-channel uniform or asymmetric
(\textit{per-channel}); and per-channel sign-magnitude (\textit{learned}), where one
explicit bit per output channel stores the sign and the remaining $(b_{i,k}-1)$ bits
uniformly quantize the absolute magnitude:
\begin{equation}
    \tilde{w}_{j,u} = \mathrm{sgn}(w_{j,u}) \cdot
        \mathrm{clip}\!\left(
            \operatorname{round}\!\left(\frac{|w_{j,u}|}{s_j^+}\right),
            0,\; 2^{b_{i,k}-1}-1
        \right) \cdot s_j^+,
    \quad
    s_j^+ = \frac{\max_u|w_{j,u}|}{2^{b_{i,k}-1}-1}
    \label{eq:sign_magnitude}
\end{equation}
where $j$ indexes the output channel and $u$ indexes scalar weights within that channel.

The normalized retained-size fraction of a compressed kernel is:
\begin{equation}
    \nu_{i,k} = \frac{b_{i,k}}{b_0} \cdot \rho_{i,k}
    \label{eq:compression_ratio}
\end{equation}
where $b_0 = 32$ and $\nu_{i,k}\in(0,1]$. This quantity is an effective
parameter-storage fraction: it combines low-bit storage with the retained-channel fraction
and does not include hardware-specific indexing overheads, activation memory, or kernel
launch effects. Let $P_{i,k}$ be the FP32 parameter count of kernel $(i,k)$,
$P_i=\sum_{k=1}^{K_i}P_{i,k}$, and $P=\sum_{i=1}^{N}P_i$. The global compression ratio is:
\begin{equation}
    \mathcal{R}(\mathbf{c}) \equiv \mathrm{CR} = \frac{1}{\bar{\nu}},
    \qquad
    \bar{\nu} = \frac{1}{P} \sum_{i=1}^{N} P_i\bar{\nu}_i,
    \qquad
    \bar{\nu}_i = \frac{1}{P_i} \sum_{k=1}^{K_i} P_{i,k}\nu_{i,k}
    \label{eq:global_cr}
\end{equation}
\noindent We report CR as the achieved effective parameter-storage compression ratio. The
associated model-size reduction is
$\mathrm{MSR}=1-\bar{\nu}=1-1/\mathrm{CR}$; for example, $\mathrm{CR}=4\times$
corresponds to a $75\%$ model-size reduction. Thus $R_i$, $\bar{R}$, and
$\bar{\nu}^*$ are retained-size fractions, whereas $R_{\mathrm{target}}$ and CR are
compression-ratio quantities.

\paragraph{Post-Compression Fine-Tuning}
Fine-tuning is performed directly on the compressed weights $\boldsymbol{\theta}'$
produced by $\mathcal{C}$. At each optimizer step, AdamW updates the weights via
standard backpropagation, after which the pruning masks from
Eq.~\eqref{eq:pruning_mask} are re-applied to every masked weight tensor, preserving
the sparsity pattern throughout the recovery phase. A cosine-annealing learning rate
schedule and patience-based early stopping are used to prevent overfitting. This
in-place refinement lets weights adapt to quantization error without allowing pruned
channels to regrow, following the common practice of maintaining fixed masks during
post-pruning recovery~\citep{han2015learning,anwar2017structured}.

\subsection{Surrogate Model and Logit-MSE Proxy}
\label{sec:surrogate}

\paragraph{Logit-MSE Proxy}
When fewer than $N_{\mathrm{rep}}^{\min}$ samples are available in the replay buffer,
the learned surrogate is not yet reliable. In this cold-start regime, the LLA uses the
logit-MSE below as a cheap proxy for accuracy degradation inside the accuracy-retention
reward term $R_{\mathrm{acc}}$ in Eq.~\eqref{eq:lla_reward}. Here
$\boldsymbol{\theta}_0$ denotes the fine-tuned uncompressed baseline used to cache
reference logits:
\begin{equation}
    \mathrm{MSE}_{\mathrm{logit}} =
    \frac{1}{|\mathcal{X}_c|}
    \sum_{\mathbf{x} \in \mathcal{X}_c}
    \left\| f_{\boldsymbol{\theta}'}(\mathbf{x})
          - f_{\boldsymbol{\theta}_0}(\mathbf{x}) \right\|^2
    \label{eq:logit_mse}
\end{equation}
where $f_{\boldsymbol{\theta}_0}(\mathbf{x})$ and
$f_{\boldsymbol{\theta}'}(\mathbf{x})$ are the output logit vectors of the original and
compressed model respectively, and $\mathcal{X}_c$ is a cached calibration set of a
fixed number of mini-batches from the training loader. A smaller
$\mathrm{MSE}_{\mathrm{logit}}$ indicates that the compressed model preserves the
original model's predictions more closely, so it receives a higher accuracy-retention
reward during early LLA search.

\paragraph{Surrogate Model}
Trainers may include a surrogate \emph{warm-up phase} prior to the
first RL cycle: a set of random compression configurations are generated, each is applied
to a temporary model copy and fine-tuned for a single epoch, and the resulting
$(\mathbf{c}, \mathcal{A})$ pair is added to the replay buffer. The surrogate MLP is
then trained on this initial dataset before the first RL cycle begins.

Let $\mathcal{B}_{\mathrm{rep}}$ denote the replay buffer of collected $(\mathbf{c}, \mathcal{A})$ pairs.
Let $N_{\mathrm{rep}}^{\min}$ be the minimum replay-buffer size required to activate the
surrogate; in this paper, we set $N_{\mathrm{rep}}^{\min}=3$ so that the surrogate can be
used after several cycle-level measurements, while CNN warm-up experiments add larger
random pools before cycle 1. Once $|\mathcal{B}_{\mathrm{rep}}| \geq N_{\mathrm{rep}}^{\min}$,
the surrogate replaces the logit-MSE proxy for reward shaping, but every reported final
accuracy is still obtained by fine-tuning and evaluating the compressed model.

Once $|\mathcal{B}_{\mathrm{rep}}| \geq N_{\mathrm{rep}}^{\min}$, a lightweight MLP
$\hat{f}: \mathbb{R}^{5K_{\mathrm{tot}}} \to \mathbb{R}$ (ReLU activations, dropout) replaces the
logit-MSE proxy. Its input encodes five values per kernel: normalized bitwidth
($b_{i,k}/10$), keep-ratio $\rho_{i,k}$, type ($0$ for INT, $1$ for FLOAT),
granularity ($\{0,\,0.33,\,0.67,\,1.0\}$ for \{uniform, log, per-channel, learned\}),
and a joint compression indicator in $[0.5, 1.0]$ (set to $1.0$ when both pruning and
sub-8-bit quantization are active, else $0.5$). Training minimizes:
\begin{equation}
    \mathcal{L}_{\mathrm{surr}} =
    \frac{1}{|\mathcal{B}_{\mathrm{rep}}|}
    \sum_{(\mathbf{c},\, a) \in \mathcal{B}_{\mathrm{rep}}}
    \left(\hat{f}(\mathbf{c}) - a\right)^2
    \label{eq:surrogate_loss}
\end{equation}
against ground-truth post-fine-tuning accuracy values $a$ collected in the
replay buffer $\mathcal{B}_{\mathrm{rep}}$. The surrogate is re-trained at the end of
each cycle as new $(\mathbf{c}^{(t)}, \mathcal{A}^{(t)})$ pairs are added.

The surrogate/proxy split is crucial for tractability: evaluating every candidate by full
post-compression fine-tuning would require hundreds of fine-tuning runs per experiment.
In our implementation, random warm-up samples use a single epoch and RL steps use either
logit-MSE or the MLP prediction, reducing search cost while reserving full fine-tuning for
selected cycle-level configurations. Because the surrogate can be noisy for compact CNNs,
we interpret it as a coarse ranking signal and report its MAE separately rather than using
it as the final performance estimate.

\subsection{Active Learning Compression Loop}
\label{sec:active_learning}

The full framework is orchestrated through an iterative active learning loop that
interleaves RL policy optimization with real model fine-tuning. For each cycle
$t \in \{1, \ldots, T\}$:

\begin{enumerate}
    \item \textbf{HLA Feedback Update.} The HLA environments are updated with the
          accuracy drop $\Delta\mathcal{A}^{(t-1)}$ and mean retained-size fraction
          $\bar{\nu}^{(t-1)}$ measured in the previous cycle, using
          $\Delta\mathcal{A}^{(0)}=0$ and $\bar{\nu}^{(0)}=1$ in the first cycle.

    \item \textbf{HLA Policy Adaptation.} For $t > 1$, the HLA agents are re-trained for
          a short burst of additional timesteps to adapt to observed LLA behavior.

    \item \textbf{Budget Allocation.} The HLA ensemble proposes a compression budget
          $\mathcal{B}_i^{(t)}$ for each block via weighted ensemble voting
          (Eq.~\eqref{eq:voting}). Early cycles use stochastic (exploratory) HLA actions;
          the final cycle uses deterministic exploitation.

    \item \textbf{LLA Training.} Each block's LLA ensemble is trained (or continues
          from the previous cycle's state) under the updated budget constraints.

    \item \textbf{Configuration Extraction.} The trained LLA agents output consensus
          configurations $\{c_{i,k}^{(t)}\}$ via ensemble voting
          (Eq.~\eqref{eq:voting}).

    \item \textbf{Compression Application.} The configurations are applied via
          $\mathcal{C}(\boldsymbol{\theta}_0, \mathbf{c}^{(t)})$: pruning
          (Eq.~\eqref{eq:pruning_mask}) followed by quantization
          (Eq.~\eqref{eq:quantization}) for every kernel.

    \item \textbf{Fine-Tuning.} The compressed model is fine-tuned via AdamW with
          cosine-annealing LR and patience-based early stopping. Pruning masks are
          re-applied after every gradient step to preserve the sparsity pattern
          throughout the recovery phase.

    \item \textbf{Surrogate Update.} The pair $(\mathbf{c}^{(t)},\, \mathcal{A}^{(t)})$
          is added to $\mathcal{B}_{\mathrm{rep}}$, and the surrogate is re-trained if
          the activation condition is met.

    \item \textbf{Best Configuration Tracking.} If $\mathcal{A}^{(t)}$ exceeds the
          current global best, the configuration and model checkpoint are saved.
\end{enumerate}


\section{Experiments}
\label{sec:exp_descriptions}

We conduct seven experiments across ViT and CNN architectures on CIFAR-10/100 and Tiny ImageNet. Experiments 1--5 use INT uniform symmetric quantization; Experiments 6--7 ablate mixed INT/FLOAT with log-scale granularity. The first five experiments cover the default INT setting across transformer and CNN families, while the final two isolate the effect of mixed INT/FLOAT quantization with log-scale granularity.

\subsubsection*{Experiment 1: DeiT-Small on CIFAR-100}
We employ the \texttt{deit\_small\_patch16\_224} model \citep{touvron2021training} configured with 48 kernels across 12 blocks. Each block undergoes simultaneous structured pruning and INT quantization. The implementation supports surrogate warm-up through randomly sampled compression configurations; when the replay buffer is below the activation threshold, the LLA instead uses the logit-MSE proxy described in Section~\ref{sec:surrogate}.

\subsubsection*{Experiment 2: CLIP ViT-B/32 on CIFAR-100}
\texttt{vit\_base\_patch32\_clip\_224} \citep{radford2021learning} fine-tuned on CIFAR-100 (48 kernels, 12 blocks). Same surrogate schedule as Experiment 1. This setting tests compressibility of vision-language representations on a single-dataset classification task.

\subsubsection*{Experiment 3: CLIP ViT-B/32 on CIFAR-10}
\texttt{vit\_base\_patch32\_clip\_224} \citep{radford2021learning} fine-tuned on CIFAR-10 (10 classes), with 20 fine-tuning epochs per cycle; the surrogate is disabled and the logit-MSE proxy is used throughout. Baseline accuracy is 89.22\%; after one compression cycle the model reaches 93.05\%, a gain attributable to compression-as-regularization correcting overfitting in an over-parameterized model.

\subsubsection*{Experiment 4: ResNet18 on Tiny ImageNet}
\texttt{resnet18} \citep{he2016deep} on Tiny ImageNet (200 classes, $64\times64$), with 17 kernels across 5 block groups. A 100-sample warm-up pool initializes the surrogate from cycle 1. Heterogeneous residual structure yields a surrogate MAE of 15.20\% ($\pm$2.52\%).

\subsubsection*{Experiment 5: MobileNetV2 on Tiny ImageNet}
\texttt{mobilenetv2\_100} \citep{sandler2018mobilenetv2} with 51 kernels across 13 inverted residual blocks; warm-up pool as in Experiment 4. Surrogate MAE of 10.37\% ($\pm$0.46\%), reflecting the smoother accuracy--compression landscape of the uniform-width design. The already-compact architecture leaves little room for compression without accuracy loss.

\subsubsection*{Experiment 6: DeiT-Small on CIFAR-100 -- Mixed INT/FLOAT + Log Granularity}
Identical to Experiment 1 but with the LLA free to select INT or FLOAT per kernel and log-scale granularity applied uniformly. Surrogate disabled; logit-MSE proxy used throughout.

\subsubsection*{Experiment 7: DeiT-Base on CIFAR-100 -- Mixed INT/FLOAT + Log Granularity}
\texttt{deit\_base\_patch16\_224} \citep{touvron2021training} ($\approx3\times$ the parameters of DeiT-Small) under the same mixed/log configuration as Experiment 6.

\paragraph{Reproducibility protocol}
For multi-seed reporting, DeiT-Small/CIFAR-100 is evaluated over three independent seeds
($42$, $123$, and $456$), with agent-level PPO/A2C initializations offset by ensemble
index. Single-run rows in the main tables are retained for direct comparison with the
original experiment numbering, while Table~\ref{tab:multiseed} reports the corresponding
seed-averaged uncertainty. All accuracy drops are computed relative to the uncompressed
baseline evaluated under the same fine-tuning and validation protocol.

\section{Quantitative Results}
\label{sec:results}

Tables~\ref{tab:main_results}--\ref{tab:extra_results} summarize all seven experiment outcomes. In these tables, CR denotes the achieved effective parameter-storage compression ratio in Eq.~\eqref{eq:global_cr}, and MSR denotes model-size reduction $1-1/\mathrm{CR}$. Table~\ref{tab:multiseed} provides uncertainty quantification for the DeiT-Small/CIFAR-100 setting.

\begin{table}[htbp]
    \caption{HiReLC compression results. Drop is signed top-1 change relative to baseline ($+$ = loss, $-$ = gain). CR is the achieved effective parameter-storage compression ratio; MSR is model-size reduction, $\mathrm{MSR}=1-1/\mathrm{CR}$. Experiments 6--7 use mixed INT/FLOAT + log granularity.}
    \label{tab:main_results}
    \begin{center}
        \renewcommand{\arraystretch}{1.3}
        \resizebox{\linewidth}{!}{%
        \begin{tabular}{lllllll}
            \textbf{Experiment} & \textbf{Model} & \textbf{Dataset} & \textbf{Baseline} & \textbf{Final} & \textbf{Drop} & \textbf{CR / MSR} \\
            \hline \\
            1 & DeiT-Small    & CIFAR-100     & 86.88\% & 85.16\% & $+$1.72\%  & 6.23$\times$ / 83.9\% \\
            2 & CLIP ViT-B/32 & CIFAR-100     & 57.50\% & 56.95\% & $+$0.55\%  & 6.63$\times$ / 84.9\% \\
            3 & CLIP ViT-B/32 & CIFAR-10      & 89.22\% & 93.05\% & $-$3.83\%  & 6.64$\times$ / 84.9\% \\
            4 & ResNet18      & Tiny ImageNet & 75.16\% & 70.94\% & $+$4.22\%  & 5.99$\times$ / 83.3\% \\
            5 & MobileNetV2   & Tiny ImageNet & 76.56\% & 70.94\% & $+$5.62\%  & 6.59$\times$ / 84.8\% \\
            \hline \\
            6 & DeiT-Small    & CIFAR-100     & 87.42\% & 86.25\% & $+$1.17\%  & 6.27$\times$ / 84.0\% \\
            7 & DeiT-Base     & CIFAR-100     & 90.70\% & 88.59\% & $+$2.11\%  & 6.72$\times$ / 85.1\% \\
        \end{tabular}
        }
    \end{center}
\end{table}

\begin{table}[htbp]
    \caption{Multi-seed reproducibility on DeiT-Small/CIFAR-100 over seeds 42, 123, and 456. Values are mean $\pm$ standard deviation across independent compression runs; CR denotes achieved effective parameter-storage compression ratio.}
    \label{tab:multiseed}
    \begin{center}
        \renewcommand{\arraystretch}{1.25}
        \begin{tabular}{lllll}
            \textbf{Setting} & \textbf{Baseline} & \textbf{Final} & \textbf{Drop} & \textbf{CR} \\
            \hline \\
            HiReLC & 87.86\% & 85.65\% & $2.21\pm0.50$\% & $6.30\pm0.19\times$ \\
        \end{tabular}
    \end{center}
\end{table}

\begin{table}[htbp]
    \caption{Per-experiment surrogate and HLA strategy metrics. Experiments 3, 6, and 7 use the logit-MSE proxy with the surrogate disabled.}
    \label{tab:extra_results}
    \begin{center}
        \renewcommand{\arraystretch}{1.3}
        \resizebox{\linewidth}{!}{
          \begin{tabular}{llllll}
              \textbf{Experiment} & \textbf{Kernels} & \textbf{Blocks} & \textbf{Surr.\ MAE} & \textbf{Dominant HLA Strategy} & \textbf{Budget Comp.} \\
              \hline \\
              1 & 48 & 12 & 3.57\% (cycle 4 only)   & Auto (75.0\%), Pruning (16.7\%), Quant (8.3\%)          & 59.8\% \\
              2 & 48 & 12 & N/A (logit-MSE, all)   & Pruning (50.0\%), Auto (25.0\%), Quant (25.0\%)         & 59.2\% \\
              4 & 17 & 5  & 15.20\% ($\pm$2.52\%)   & Quantization (80.0\%), Pruning (20.0\%)                  & 60.9\% \\
              5 & 51 & 13 & 10.37\% ($\pm$0.46\%)   & Auto (53.8\%), Quant (30.8\%), Pruning (15.4\%)         & 62.4\% \\
              \hline \\
              3 & 48 & 12  & N/A (disabled)             & Quantization (33.33\%), Auto (50.00\%), Pruning (16.67\%)                  & N/A \\
              6 & 48 & 12 & N/A (disabled)           & Auto (66.7\%), Pruning (16.7\%), Quant (16.7\%)         & 58.3\% \\
              7 & 48 & 12 & N/A (disabled)           & Pruning (66.7\%), Auto (25.0\%), Quant (8.3\%)          & 59.7\% \\
          \end{tabular}
        }
    \end{center}
\end{table}

\textbf{Experiment 1 (DeiT-Small / CIFAR-100):} From 86.88\%, HiReLC yields 85.16\% (a 1.72\% drop) at 6.23$\times$ CR and 83.9\% MSR, with average global sparsity of 19.18\%. Best result in cycle 3; per-cycle drops: 3.52\%, 2.34\%, 1.72\%, 2.81\%. Figure~\ref{fig:heatmap} summarizes the learned per-block allocation for this experiment.

\begin{figure}[t]
    \centering
    \includegraphics[width=1\linewidth]{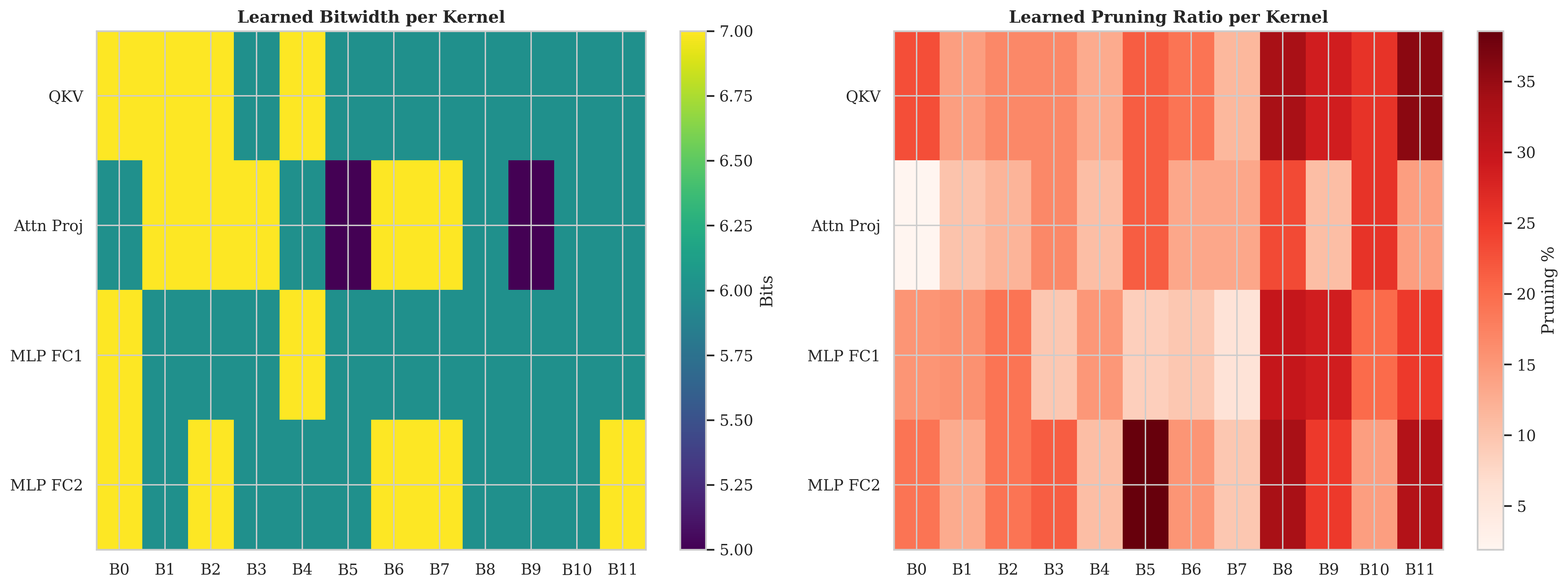}
    \caption{Per-block compression allocation heatmap for Experiment 1: rows are kernel types (QKV, AttnProj, FC1, FC2), columns are transformer blocks, and darker cells indicate higher compression.}
    \label{fig:heatmap}
\end{figure}

\noindent The allocation pattern in Figure~\ref{fig:heatmap} supports the quantitative
result of Experiment 1: HiReLC does not reach its compression ratio by applying a
uniform policy across the transformer. Instead, the controller preserves selected early
attention projections and mid-depth MLP components while shifting more aggressive
compression toward later blocks. This behavior is consistent with the intended role of
sensitivity-aware budgeting: components that appear more fragile retain capacity,
whereas more redundant later-block components absorb a larger share of the pruning and
lower-bit assignments.

\textbf{Experiment 2 (CLIP ViT-B/32 / CIFAR-100):} Smallest drop of any experiment: 0.55\% at 6.63$\times$ CR. The HLA favors pruning (50\%), consistent with substantial structural redundancy in CLIP attention heads. Best result in cycle 2; per-cycle drops: 3.12\%, 0.55\%, 0.86\%, 3.36\%.

\textbf{Experiment 3 (CLIP ViT-B/32 / CIFAR-10):} Accuracy improves from 89.22\% to 93.05\% (a 3.83\% gain) at 6.64$\times$ CR after a single cycle. The compression-plus-fine-tuning acts as a regularizer, correcting overfitting of the large-capacity CLIP model to the 10-class task.

\textbf{Experiment 4 (ResNet18 / Tiny ImageNet):} From 75.16\%, the final model reaches 70.94\% (a 4.22\% drop) at 5.99$\times$ CR. Monotonically improving per-cycle drops (11.72\%, 5.08\%, 4.53\%, 4.22\%). The HLA selects quantization-centric strategies (80\%), reflecting ResNet's pruning sensitivity.

\textbf{Experiment 5 (MobileNetV2 / Tiny ImageNet):} From 76.56\%, the model reaches 70.94\% (a 5.62\% drop) at 6.59$\times$ CR. Best in cycle 2. Figure~\ref{fig:budget_alloc_all_cycles} shows the HLA progressively shifting toward pruning in mid-to-late blocks across cycles.

\textbf{Experiment 6 (DeiT-Small / CIFAR-100 / Mixed+Log):} Achieves 86.25\% (a 1.17\% drop) at 6.27$\times$ CR, a lower drop than the INT uniform Experiment 1 (1.72\%) at a comparable CR, suggesting per-kernel type flexibility aids accuracy recovery. Best in cycle 2; per-cycle drops: 4.77\%, 1.17\%, 2.58\%, 1.48\%.

\textbf{Experiment 7 (DeiT-Base / CIFAR-100 / Mixed+Log):} From a 90.70\% baseline, one cycle yields 88.59\% (a 2.11\% drop) at 6.72$\times$ CR and 85.1\% MSR, the highest CR and MSR in this work. The HLA strongly favors pruning (66.7\%), consistent with greater structural redundancy in the larger model.

\begin{figure}[H]
    \centering
    \begin{subfigure}{0.88\linewidth}
        \centering
        \includegraphics[width=\linewidth]{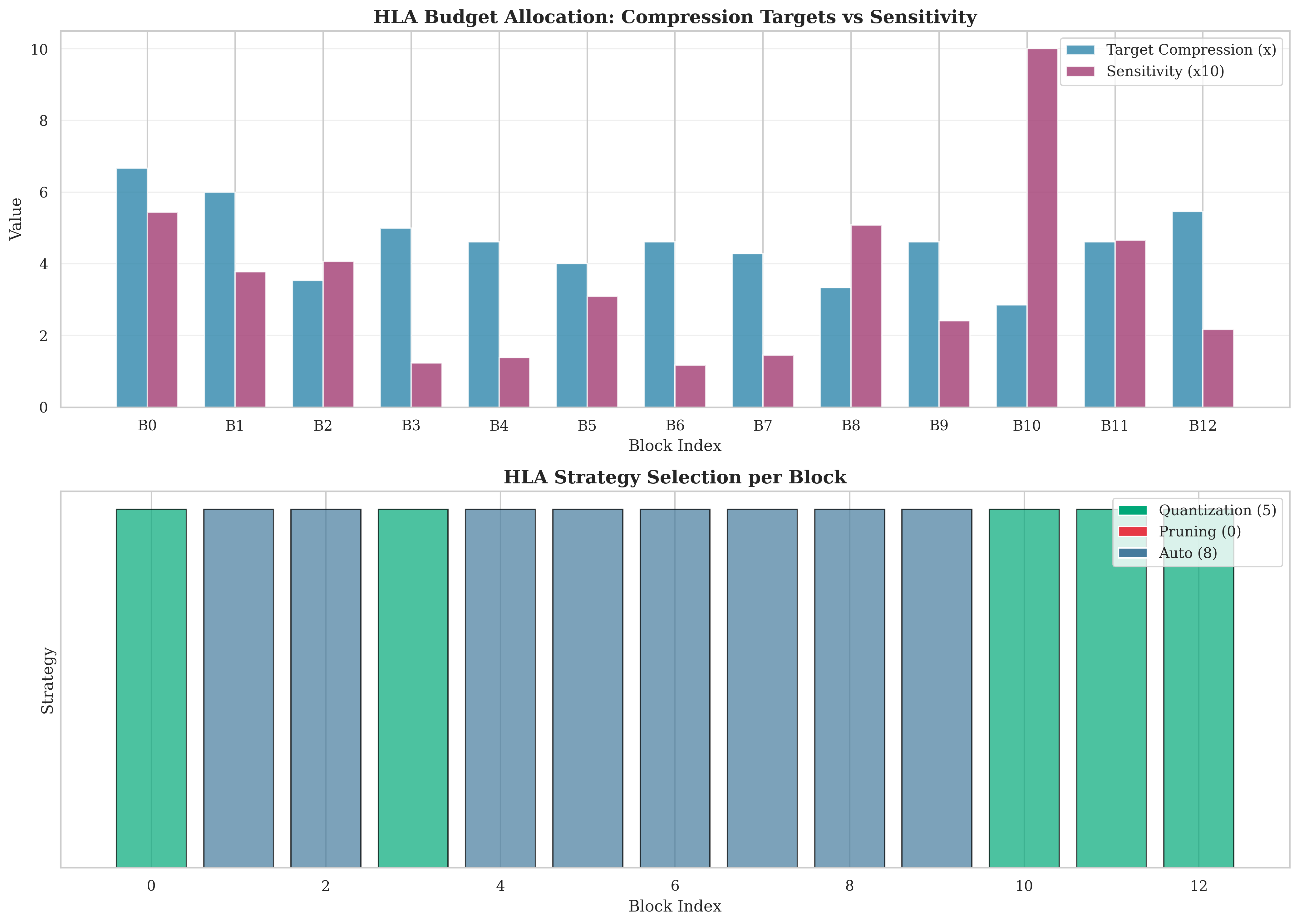}
        \caption{Cycle 1}
    \end{subfigure}
    \par\medskip
    \begin{subfigure}{0.88\linewidth}
        \centering
        \includegraphics[width=\linewidth]{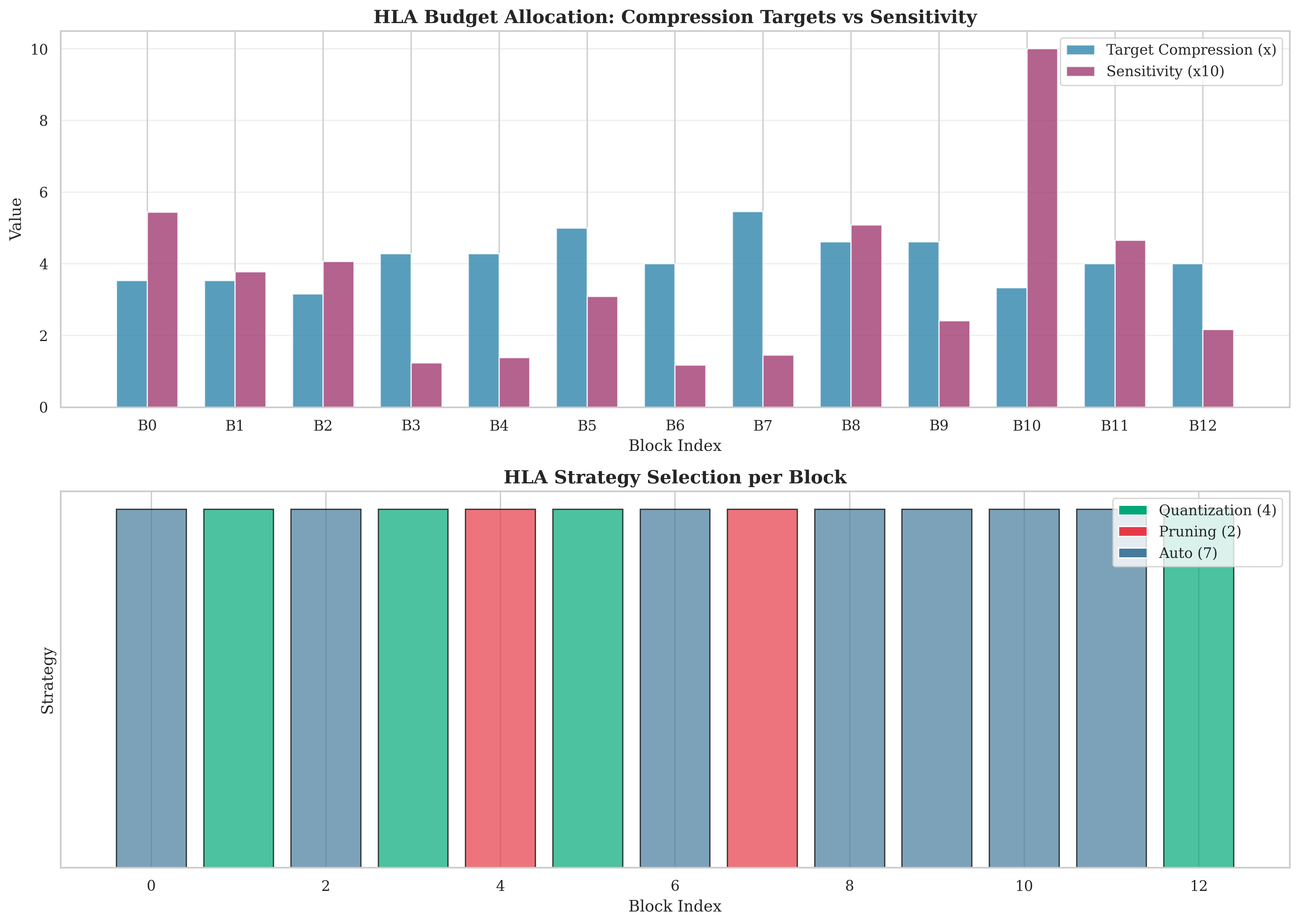}
        \caption{Cycle 2}
    \end{subfigure}
    \caption{MobileNetV2 Tiny ImageNet HLA budget allocation across four HiReLC cycles (cycles 1--2). The vertically stacked plots improve readability of the per-block allocations and show the policy beginning to shift compression pressure toward mid-to-late inverted residual blocks.}
    \label{fig:budget_alloc_all_cycles}
\end{figure}

\begin{figure}[H]
    \ContinuedFloat
    \centering
    \begin{subfigure}{0.88\linewidth}
        \centering
        \includegraphics[width=\linewidth]{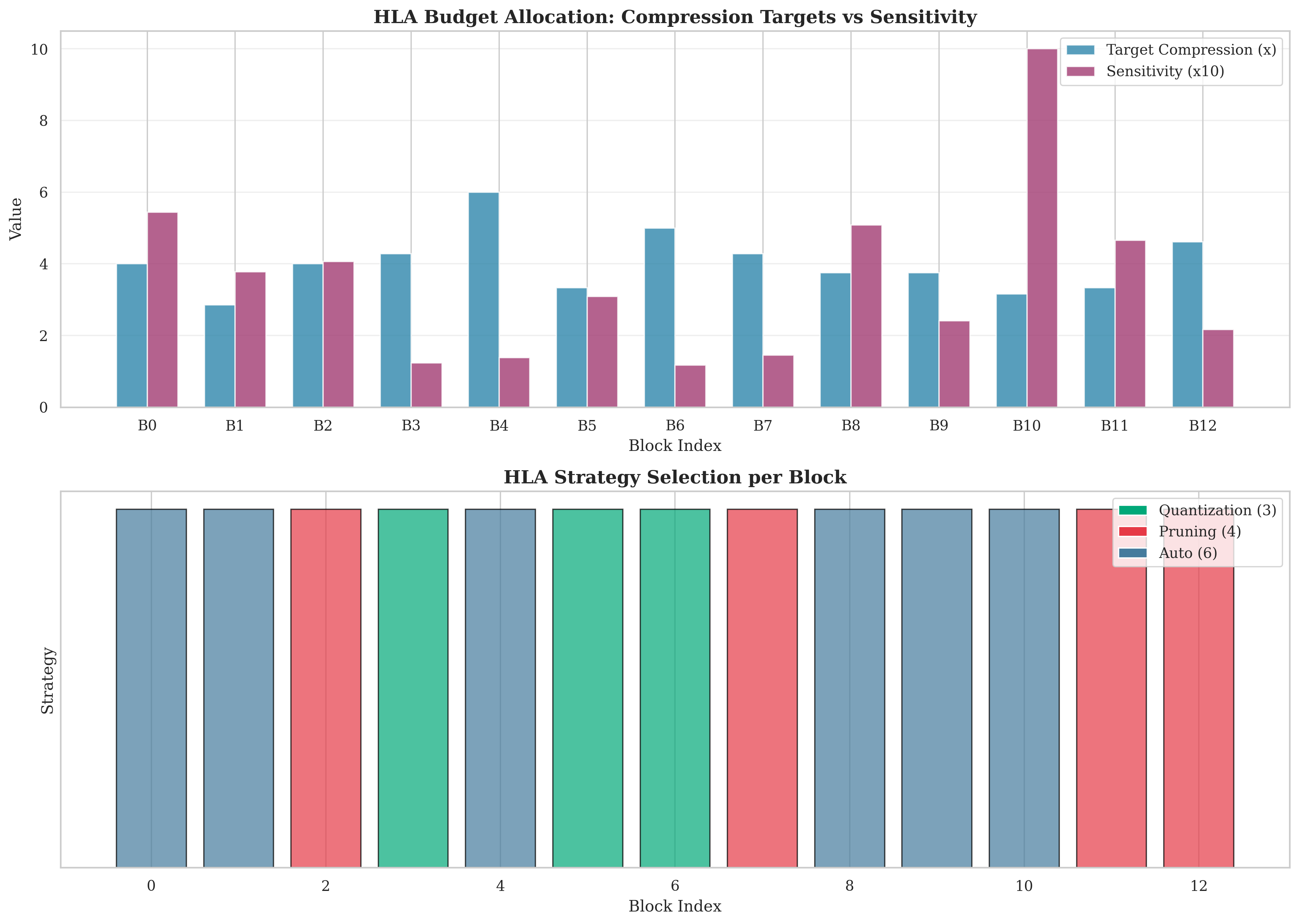}
        \caption{Cycle 3}
    \end{subfigure}
    \par\medskip
    \begin{subfigure}{0.88\linewidth}
        \centering
        \includegraphics[width=\linewidth]{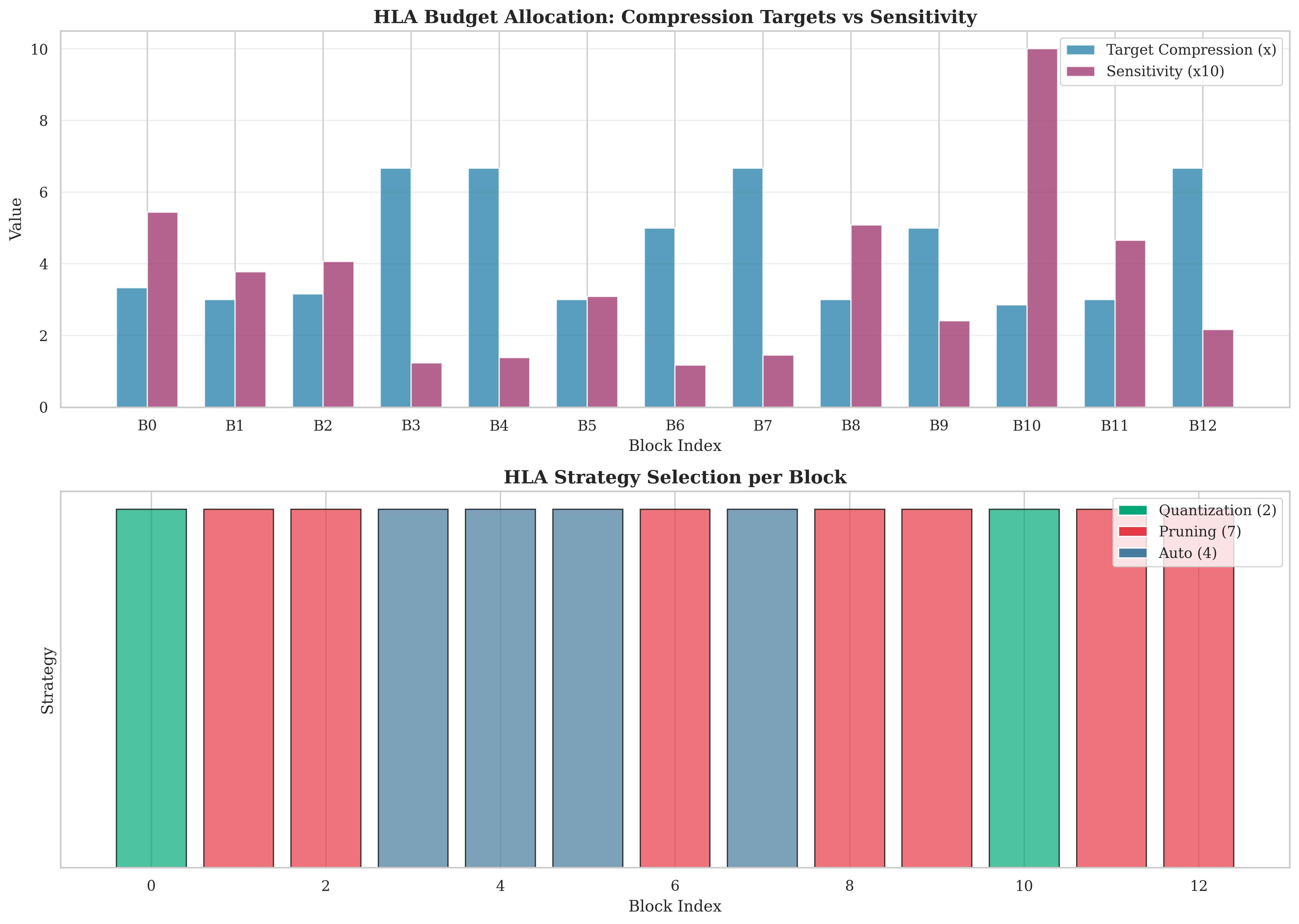}
        \caption{Cycle 4}
    \end{subfigure}
    \caption{MobileNetV2 Tiny ImageNet HLA budget allocation across four HiReLC cycles (cycles 3--4, continued). Later cycles show the policy shifting more strongly toward pruning in mid-to-late blocks.}
\end{figure}

\begin{table}[htbp]
    \caption{Ablation studies on DeiT-Small/CIFAR-100. CR denotes achieved effective parameter-storage compression ratio. The HLA ablation uses uniform budgets with the LLA only; the RL-algorithm ablation compares the default mixed ensemble against a homogeneous A2C ensemble in the first cycle.}
    \label{tab:ablations}
    \begin{center}
        \renewcommand{\arraystretch}{1.25}
        \resizebox{\linewidth}{!}{
        \begin{tabular}{lllll}
            \textbf{Ablation} & \textbf{Configuration} & \textbf{Accuracy Drop} & \textbf{CR} & \textbf{Observation} \\
            \hline \\
            HLA removal & LLA only, uniform budgets & 6.80\% & 8.87$\times$ & Over-compresses and weakens budget control \\
            Full HiReLC & HLA + LLA ensemble & 1.72\% & 6.23$\times$ & Better accuracy retention and controlled CR \\
            RL ensemble & [PPO, A2C, PPO] & 3.28\% & 6.38$\times$ & Balanced first-cycle trade-off \\
            RL ensemble & [A2C, A2C, A2C] & 7.19\% & 6.63$\times$ & More aggressive but less stable \\
        \end{tabular}}
    \end{center}
\end{table}

\begin{figure}[H]
    \centering
    \includegraphics[width=0.95\linewidth]{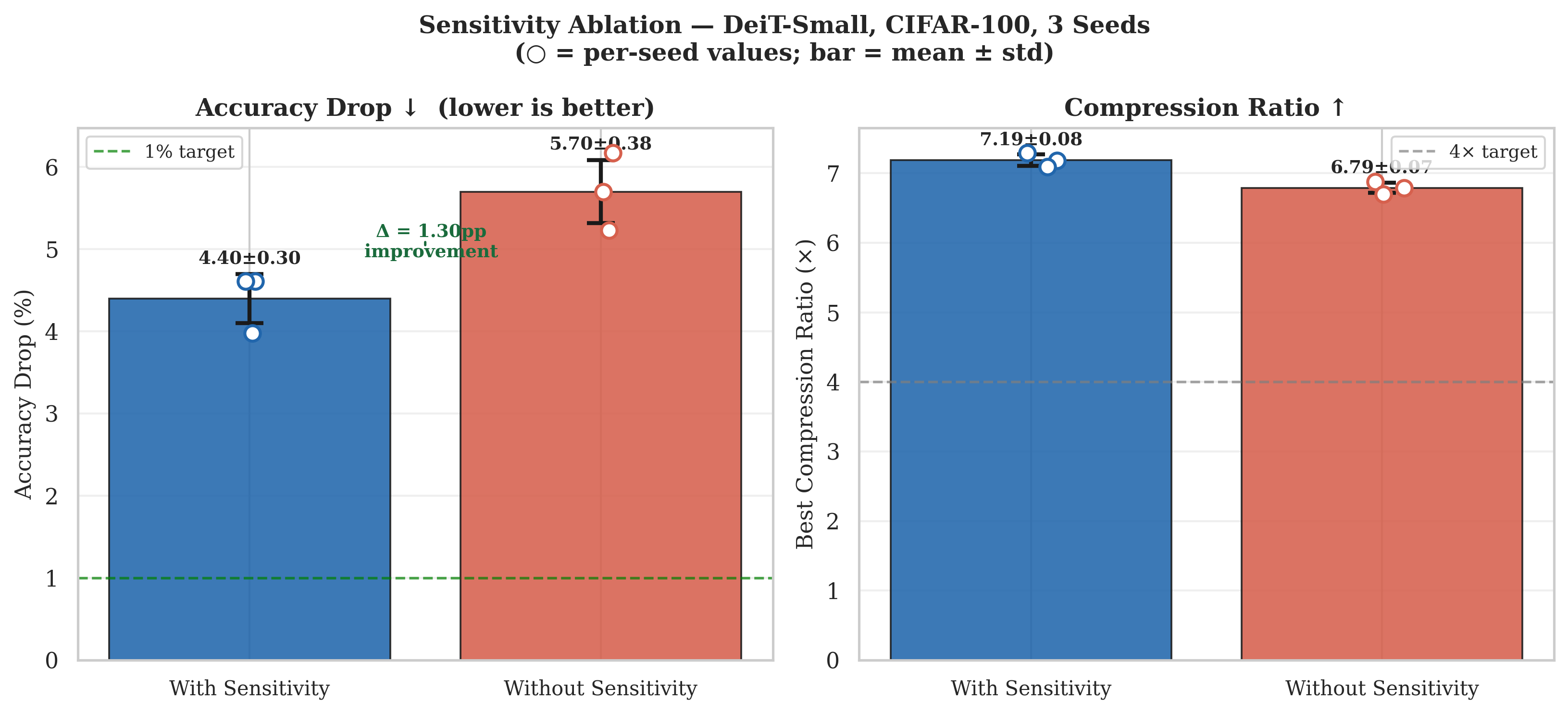}
    \caption{Multi-seed sensitivity ablation on DeiT-Small/CIFAR-100. Fisher-guided sensitivity improves accuracy retention by 1.30 percentage points over the non-sensitivity setting and reduces run-to-run variability.}
    \label{fig:sensitivity_multiseed}
\end{figure}

\begin{figure}[H]
    \centering
    \includegraphics[width=0.75\linewidth]{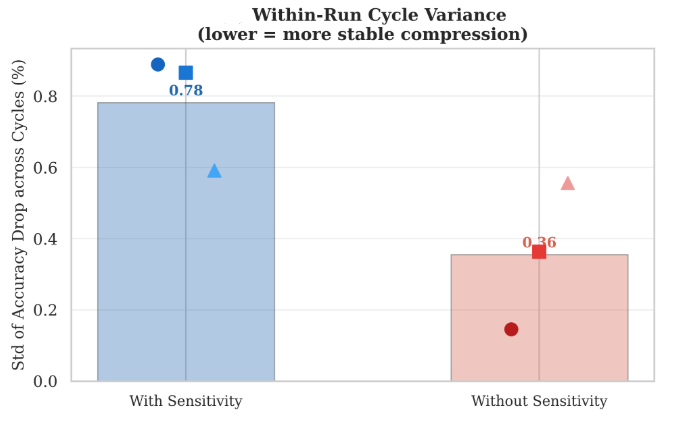}
    \caption{Allocation heterogeneity induced by Fisher sensitivity. Sensitivity guidance increases compression-ratio dispersion across blocks (0.000301 vs. 0.000116), producing more differentiated block-wise budget assignments.}
    \label{fig:sensitivity_variance}
\end{figure}

\begin{figure}[H]
    \centering
    \includegraphics[width=0.95\linewidth]{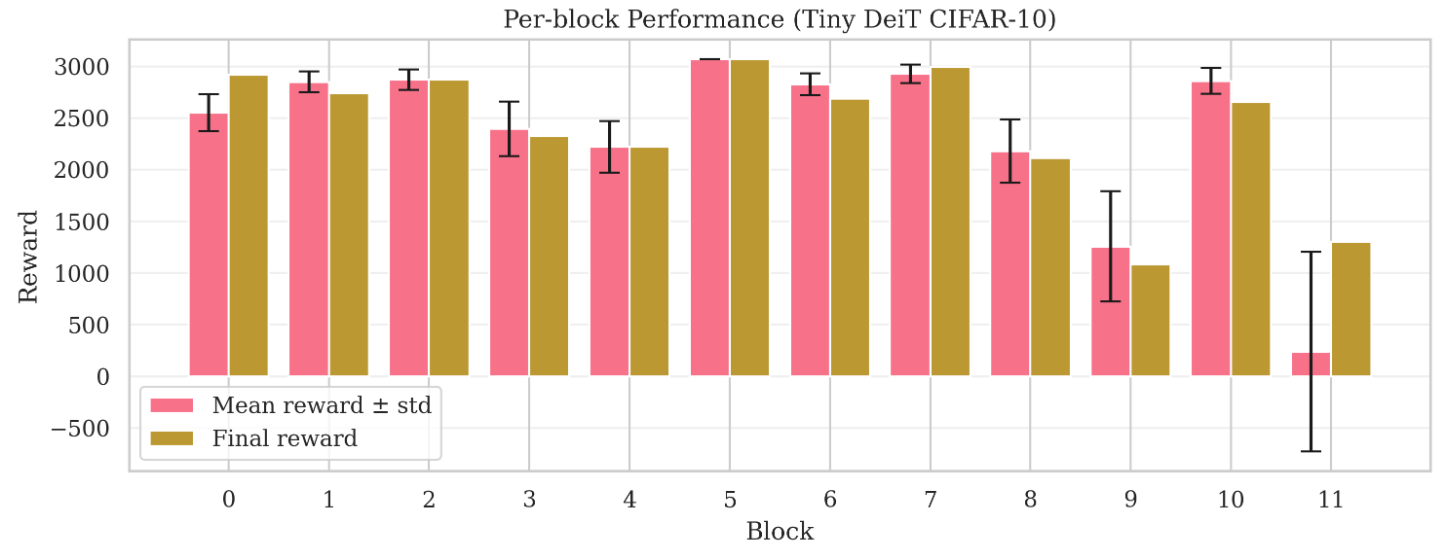}
    \caption{Representative LLA training return curve for DeiT-Small. Mean episode reward improves during the controller update phase and then stabilizes, indicating that the ensemble learns higher-reward compression configurations before cycle-level fine-tuning.}
    \label{fig:lla_reward_curve}
\end{figure}

\section{Contextual Comparison with Prior Compression Methods}
\label{sec:sota}

Table~\ref{tab:sota} gives a contextual comparison with representative compression methods using the metrics reported in their original publications. It is not a same-dataset leaderboard: most prior work reports ImageNet-1K results, while our main HiReLC experiments use CIFAR-10/100 or Tiny ImageNet, and the reported ``compression ratio'' may denote FLOPs, latency, storage, or combined pruning+quantization depending on the method. The comparison therefore highlights scope and trade-offs rather than claiming direct numerical dominance.

\begin{table}[htbp]
    \caption{Contextual comparison with representative compression methods. $\star$ indicates numbers reported on ImageNet-1K in the cited paper; HiReLC rows are our protocol-dependent results on CIFAR-10/100 or Tiny ImageNet. \textbf{CR / Speedup} is not normalized across papers: it denotes FLOPs, latency, storage, or combined pruning+quantization depending on the method. Accuracy $\Delta$ is absolute top-1 change from each method's own baseline, with $(+)$ denoting a drop and $(-)$ denoting a gain.}
    \label{tab:sota}
    \begin{center}
        \renewcommand{\arraystretch}{1.25}
        \resizebox{\linewidth}{!}{
          \begin{tabular}{llllllll}
              \toprule
              \textbf{Method} & \textbf{Model} & \textbf{Dataset} & \textbf{Joint P+Q} & \textbf{CR / Speedup} & \textbf{Acc.\ $\Delta$} & \textbf{Baseline} & \textbf{Search / Protocol} \\
              \midrule
              AMC \citep{he2018amc}
                & MobileNet-V1  & ImageNet$^\star$ & No (prune)
                & $2\times$ FLOPs   & $-0.1\%$    & 70.3\%  & RL (DDPG) \\
              HAQ \citep{wang2019haq}
                & ResNet-50     & ImageNet$^\star$ & No (quant.)
                & $1.4$--$1.95\times$ Latency & $\approx 0\%$ & 76.1\% & RL (DDPG) \\
              HAWQ-V2 \citep{dong2020hawqv2}
                & ResNet-50     & ImageNet$^\star$ & No (quant.)
                & $13\times$ storage & +0.65\% & 76.1\% & Hessian trace \\
              I-ViT \citep{li2023ivit}
                & DeiT-S        & ImageNet$^\star$ & No (quant.)
                & $4\times$ INT8 theoretical & $-0.27\%$ & 79.85\% & QAT (INT8) \\
              DeepCompress-ViT \citep{ahmed2025deepcompress}
                & DeiT-S        & ImageNet$^\star$ & No (storage + quant.)
                & $>14\times$ storage & $<0.5\%$ & 79.8\% & UCT + compres. \\
             & & & & & & &    training \\
              \midrule
              \textbf{HiReLC (ours)} & DeiT-Small    & CIFAR-100     & \textbf{Yes} & \textbf{6.23$\times$ size} & +1.72\% & 86.88\% & \textbf{Hier.\ RL, ours} \\
              \textbf{HiReLC (ours)} & CLIP ViT-B/32 & CIFAR-10      & \textbf{Yes} & \textbf{6.64$\times$ size} & $-3.83\%$ & 89.22\% & \textbf{Hier.\ RL, ours} \\
              \textbf{HiReLC (ours)} & ResNet18      & Tiny ImageNet & \textbf{Yes} & \textbf{5.99$\times$ size} & +4.22\% & 75.16\% & \textbf{Hier.\ RL, ours} \\
              \textbf{HiReLC (ours)} & DeiT-Small    & CIFAR-100     & \textbf{Yes} & \textbf{6.27$\times$ size} & +1.17\% & 87.42\% & \textbf{Hier.\ RL, ours} \\
              \textbf{HiReLC (ours)} & DeiT-Base     & CIFAR-100     & \textbf{Yes} & \textbf{6.72$\times$ size} & +2.11\% & 90.70\% & \textbf{Hier.\ RL, ours} \\
              \bottomrule
          \end{tabular}
        }
    \end{center}
\end{table}

\begin{table*}[t]
\centering
\caption{Contextual ImageNet-1K comparison. Prior-method rows reproduce reported ImageNet-1K numbers from the cited papers. HiReLC rows are our data-efficient ImageNet-1K protocol and should be read as protocol-dependent evidence rather than matched re-implementations. Accuracy $\Delta$ follows the convention: $(+)$ denotes a drop and $(-)$ denotes an improvement relative to the corresponding baseline. CR denotes compression ratio or speedup as reported in each work.}
\label{tab:imagenet_compression}
\scriptsize
\setlength{\tabcolsep}{2.6pt}
\begin{tabular}{@{}lcccccc@{}}
\toprule
\textbf{Method} & \textbf{Year} & \textbf{Base} & \textbf{Acc. $\Delta$ (\%)} & \textbf{CR / Speedup} & \textbf{Metric} & \textbf{Protocol} \\
\midrule

\multicolumn{7}{l}{\textit{ResNet-50 / ImageNet-1K}} \\
AMC \cite{he2018amc} & 2018 & 76.15 & +1.81 & 2.00$\times$ & FLOPs pruning & Original \\
HAQ \cite{wang2019haq} & 2019 & 76.15 & +0.81 & 1.4--1.95$\times$ & latency quant. & Original \\
HAWQ-V2 \cite{dong2020hawqv2} & 2020 & 76.13 & +0.37 & 3.70$\times$ & W2--8 size & Original \\
\textbf{HiReLC (ours)$^{\dagger\ddagger}$} & \textbf{2026} & \textbf{74.12} & \textbf{$\approx$0.00} & \textbf{6.32$\times$} & \textbf{W3--8 + prune size} & \textbf{Subset eval.} \\
\midrule

\multicolumn{7}{l}{\textit{DeiT-Small / ImageNet-1K}} \\
I-ViT \cite{li2023ivit} & 2023 & 79.85 &
\begin{tabular}[c]{@{}c@{}}
+0.68 (PTQ) \\
$-$0.27 (QAT)
\end{tabular}
& 4.0$\times$ & INT8 quant. & Original \\
\textbf{HiReLC (ours)$^{\dagger\ddagger}$} & \textbf{2026} & \textbf{77.18} & \textbf{+2.17} & \textbf{5.59$\times$} & \textbf{W3--8 + prune size} & \textbf{Subset eval.} \\
\bottomrule
\end{tabular}

\vspace{2mm}

\footnotesize
$^\dagger$ HiReLC uses a data-efficient subset of ImageNet-1K
(ResNet-50: 50/1024 shards $\approx$62k images; DeiT-Small:
100/1024 shards $\approx$125k images), evaluated on $\approx$39k validation images.

$^\ddagger$ RL budget is 512 timesteps per LLA agent
(3 agents $\times$ 12--16 blocks). HiReLC accuracy deltas are measured relative to
uncompressed baselines under the same subset protocol; prior baselines are reported using
values from their original publications without re-implementation.
\end{table*}

\begin{figure}[H]
    \centering
    \includegraphics[width=1\linewidth]{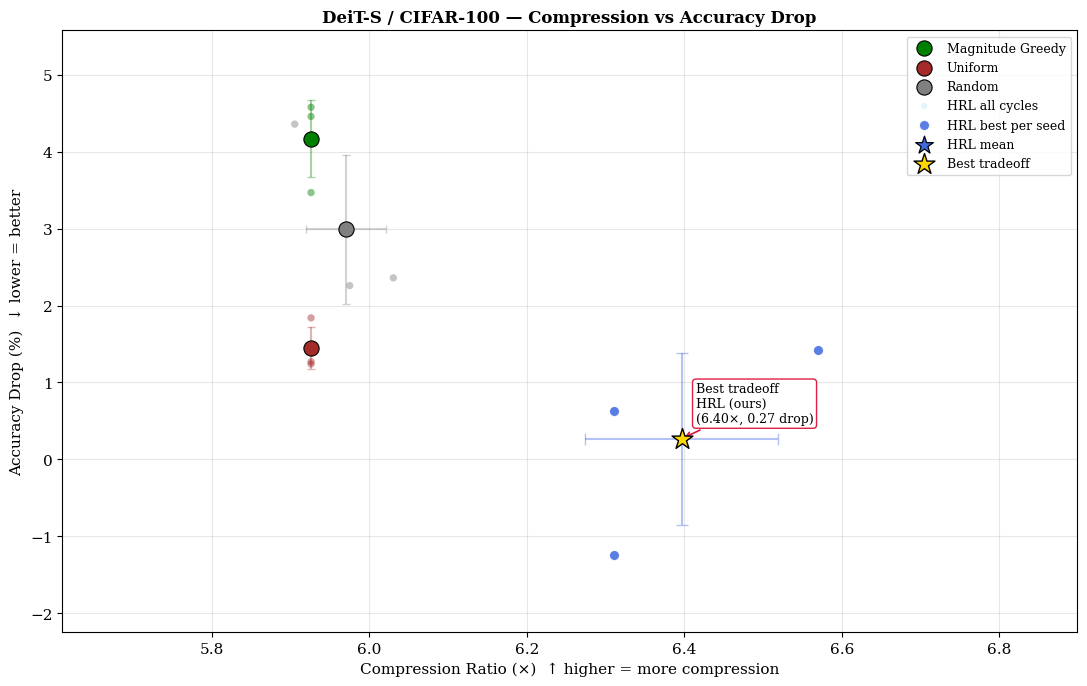}
    \caption{Contextual Pareto view of reported compression ratio versus accuracy drop for HiReLC and representative baselines. Points use each paper's own metric and evaluation protocol, so the figure summarizes trade-off regions rather than a matched benchmark.}
    \label{fig:pareto_baselines}
\end{figure}

Contextualizing the comparison: AMC demonstrates that RL can find effective pruning policies for MobileNet-V1, HAQ uses RL for hardware-aware mixed-precision quantization, HAWQ-V2 shows that Hessian trace information can guide aggressive low-bit quantization, and I-ViT provides strong integer-only ViT inference through QAT. These methods are strong within their targeted compression axis, but they do not jointly search structured pruning and quantization. DeepCompress-ViT is closest in motivation for ViTs, but its reported gains emphasize storage-oriented weight coding and quantization rather than a structured channel-selection policy.

HiReLC should therefore be interpreted as complementary rather than uniformly superior: it achieves effective parameter-storage CRs of 5.99--6.72$\times$ using one hierarchical RL controller across four distinct model families, with 0.55--5.62\% drops in most settings and one CIFAR-10 accuracy gain. The main advantage is the unified search space and controller-level architecture agnosticism; the main caveat is that the absolute numbers are not directly comparable to ImageNet-1K, latency-optimized, or storage-coded baselines without a matched benchmark.

When contextual ImageNet-1K comparisons are included, HiReLC uses a data-efficient
fine-tuning protocol: ResNet-50 uses 50/1024 training shards (approximately 62k images),
DeiT-Small uses 100/1024 shards (approximately 125k images), and evaluation is performed
on a fixed validation subset of approximately 39k images. Reported HiReLC drops are
always relative to the uncompressed baseline under the same protocol; external baselines
are reproduced from their original publications when full retraining is infeasible.

\section{Discussion}
\label{sec:discussion}

\textbf{Framework effectiveness.} HiReLC achieves effective parameter-storage CRs of 5.99--6.72$\times$ with MSRs of 83.3--85.1\% across all architectures, with accuracy drops of 0.55--5.62\% (excluding the CIFAR-10 gain). The per-cycle improvement in Experiment 1 (3.52\%$\to$1.72\%) indicates that the active loop can improve configurations across cycles, with a mild cycle-4 regression reflecting RL stochasticity and the well-documented redundancy in transformer attention heads \citep{michel2019heads,voita2019attention}.

\textbf{CLIP compressibility.} CLIP's contrastive pretraining on 400M image--text pairs \citep{radford2021learning} produces representations with substantial downstream redundancy: Experiment 2 achieves the lowest accuracy drop (0.55\%) at 6.63$\times$ CR. Experiment 3's 3.83\% accuracy \emph{gain} further shows that compression-plus-fine-tuning can act as a beneficial regularizer for over-parameterized vision-language models on simple tasks.

\textbf{CNNs and surrogate accuracy.} CNNs incur larger drops (4.22\%, 5.62\%) than ViTs despite comparable CRs, reflecting their smaller absolute capacity and the harder Tiny ImageNet task. The surrogate is active from cycle 1 in CNN experiments: MAEs of 15.20\% (ResNet18) and 10.37\% (MobileNetV2) are non-negligible, so these values should be viewed as coarse reward-shaping signals rather than calibrated accuracy predictors. The lower MAE for MobileNetV2 is consistent with its more uniform layer structure and smoother optimization landscape. Larger warm-up pools (200--500 samples) would improve surrogate fidelity in future work.

\textbf{Logit-MSE proxy.} ViT experiments use the logit-MSE proxy for three of four cycles with competitive results, suggesting it is a useful low-cost quality signal for transformer-scale compression. Its usefulness is consistent with the smoother representation spaces of ViTs, where output logit fidelity can track downstream accuracy well enough for reward shaping.

\textbf{Budget compliance and HLA dynamics.} Budget compliance of 59.2--62.4\% across experiments indicates consistent over-compression by the LLA, especially early in training, which is expected behavior for broadly exploring RL agents. Therefore, the reported CRs should be interpreted as achieved effective size reductions rather than exact target-compliant outcomes. Future work can tighten compliance by penalizing target deviation more strongly or adding a deterministic post-processing constraint.

\textbf{Sensitivity guidance.} The multi-seed sensitivity ablation in Figures~\ref{fig:sensitivity_multiseed}--\ref{fig:sensitivity_variance} shows that Fisher guidance improves accuracy by 1.30 percentage points over a non-sensitivity variant and produces more structured compression allocations. The guided policy shifts compression toward later blocks (B8--11: 8.25$\times$ vs. 6.66$\times$ without sensitivity) while preserving mid-depth blocks (B2--7: 6.46$\times$ vs. 5.97$\times$), matching the intended role of $s_i$ as a block-importance prior.

\textbf{Effect of quantization configuration (Experiments 6--7).} The mixed INT/FLOAT + log-scale configuration in Experiment 6 achieves a lower drop (1.17\%) than the INT uniform Experiment 1 (1.72\%) at a comparable CR, suggesting per-kernel type flexibility aids accuracy recovery. The DeiT-Base result in Experiment 7 (2.11\% drop at 6.72$\times$ CR, 85.1\% MSR) is the highest-compression result in our study and suggests that larger ViTs may tolerate aggressive joint compression under this configuration. Whether the improvement in Experiment 6 is attributable primarily to the mixed type or log granularity remains to be disentangled in future controlled ablations.

\textbf{Extensibility and Design Choices.} HiReLC is a modular framework rather than a fixed algorithm, and each component can be
replaced or expanded without changing the overall structure. In this work, we report a
specific instantiation (PPO/A2C LLAs, Fisher-based sensitivity, logit-MSE or MLP
surrogate, and a particular goal/budget formulation). These choices can be swapped or
augmented: e.g., alternative sensitivity metrics (Fisher, $L_1$, SNIP, GraSP, Hessian
trace), additional agent types or critics, or goal functions prioritizing latency,
energy, or memory rather than accuracy alone.

Current results also reflect practical training budgets. More cycles, larger warm-up
pools, or stricter budget-penalty terms can be applied within the same framework to
improve compliance and accuracy. Likewise, the surrogate can be strengthened (e.g., with
larger pools or better architectures) without altering the core hierarchical RL design.

\textbf{Limitations.} HiReLC prioritizes effective parameter-storage reduction and does not yet optimize a
measured latency or energy objective, so compression ratios do not necessarily translate
linearly to wall-clock speedups on all hardware. Reported CRs assume packed structured
storage or an equivalent channel-materialization pass and exclude backend-specific sparse
indexing overheads. The surrogate can be noisy for compact
CNNs, as seen in the ResNet18 MAE, and the logit-MSE proxy is only an indirect accuracy
signal. The current budget-compliance term still permits over-compression in early
cycles, and the stability reward is implemented as a compact history-dependent shaping
term rather than by explicitly augmenting the state with the full reward window. Finally,
the comparison tables are contextual rather than matched leaderboards because some baseline
numbers rely on originally reported ImageNet-1K results and full matched retraining of
every baseline is computationally prohibitive.

\textbf{Broader implications.} HiReLC demonstrates that a single unified hierarchical controller can compress diverse architectures, including standard ViTs, vision-language models, and compact CNNs, without architecture-specific controller redesign. The key enabling design choices are: (1) separating global budget allocation (HLA) from per-kernel optimization (LLA); (2) ensemble LLA diversity via PPO+A2C; (3) logit-MSE proxy as a low-cost quality signal; and (4) an optional MLP surrogate to amortize expensive evaluations. Achieved effective parameter-storage CRs of $\approx6\times$ with 0.5--6\% accuracy drops across all tested architectures suggest HiReLC as a promising starting point for deployment settings where model size is the primary constraint; hardware-specific latency or energy optimization remains future work.

\section{Conclusion}
We introduced HiReLC, a hierarchical ensemble-RL framework that jointly optimizes
structured pruning and mixed-precision quantization through sensitivity-aware guidance
and an active learning compression loop. Across diverse architecture families spanning
Vision Transformers and CNNs, the method achieves effective parameter-storage compression
ratios of 5.99--6.72$\times$ with accuracy changes ranging from a 3.83\%
gain to a 5.62\% drop, while using ensemble agents with heterogeneous reward weights to
stabilize the discrete compression search.

Future work will explore hardware-in-the-loop objectives, broader backbones, matched
re-implementations of representative baselines, and tighter integration of latency and
energy models into the reward. The existing framework can also support stricter
constraint handling by adding a deterministic post-processing step that relaxes
compression on the most sensitive blocks when a cycle exceeds the target accuracy drop.

\bibliographystyle{plainnat} 
\bibliography{main}         

\newpage


\appendix

\section*{A - Proofs and Derivations}

\subsection*{A.1 \; Reward Component Scale Compatibility}

We verify that the five components of the LLA reward (Eq.~\ref{eq:lla_reward}) operate
on compatible empirical scales, so that no single term dominates the signal by
construction.

\subsubsection*{A.1.1. Accuracy reward $R_{\mathrm{acc}} \in [0, 100]$}
$R_{\mathrm{acc}}$ is defined as a piecewise-linear function of the estimated accuracy
drop $\Delta \hat{\mathcal{A}} \geq 0$:
\begin{equation}
    R_{\mathrm{acc}}(\Delta\hat{\mathcal{A}}) =
    \begin{cases}
        100 & \Delta\hat{\mathcal{A}} < 1 \\
        95 - 5(\Delta\hat{\mathcal{A}} - 1)   & 1 \leq \Delta\hat{\mathcal{A}} < 2 \\
        85 - 10(\Delta\hat{\mathcal{A}} - 2)  & 2 \leq \Delta\hat{\mathcal{A}} < 3 \\
        \max\bigl(0,\; 70 - 10(\Delta\hat{\mathcal{A}} - 3)\bigr) & \Delta\hat{\mathcal{A}} \geq 3
    \end{cases}
\end{equation}

Since $\Delta\hat{\mathcal{A}} \geq 0$ and the function is non-increasing with a floor
at $0$, we have $R_{\mathrm{acc}} \in [0, 100]$ for all inputs. The function is
piecewise-linear and non-increasing: at $\Delta\hat{\mathcal{A}} = 1$, the reward
changes from $100$ just below the breakpoint to $95$ at the breakpoint, and larger
drops are penalized more severely by the subsequent slopes ($-5$, $-10$, $-10$),
matching the asymmetric cost of accuracy degradation in compression tasks.

\subsubsection*{A.1.2. Compression reward $R_{\mathrm{comp}} \in [0, 60)$}
From Eq.~\eqref{eq:compression_ratio}, $\nu_{i,k} = (b_{i,k}/32)\cdot\rho_{i,k}$,
so $\bar{\nu}_i \in (0, 1]$ since $b_{i,k} \leq 32$ and $\rho_{i,k} \in (0,1]$.
Therefore:
\begin{equation}
    R_{\mathrm{comp}} = 60 \cdot (1 - \bar{\nu}_i) \in [0,\; 60)
\end{equation}

The upper bound of $60$ is approached as $\bar{\nu}_i \to 0$ (extreme compression);
$R_{\mathrm{comp}} = 0$ when $\bar{\nu}_i = 1$ (no compression). The maximum is
strictly less than $60$ since $\rho_{i,k} > 0$ by definition.

\subsubsection*{A.1.3. Budget compliance reward $R_{\mathrm{budget}} \in (0, 20]$}
Let $\epsilon_i = |\bar{\nu}_i - R_i| \geq 0$ be the deviation from the HLA-assigned
target. Then:
\begin{equation}
    R_{\mathrm{budget}} = 20 \cdot \exp(-10\,\epsilon_i)
\end{equation}

Since $\exp(-10\,\epsilon_i) \in (0, 1]$ for $\epsilon_i \geq 0$, we have
$R_{\mathrm{budget}} \in (0, 20]$. The maximum of $20$ is attained exactly when
$\bar{\nu}_i = R_i$ (perfect compliance). The reward decays exponentially: at
$\epsilon_i = 0.1$, $R_{\mathrm{budget}} \approx 7.36$; at $\epsilon_i = 0.3$,
$R_{\mathrm{budget}} \approx 1.00$, providing a sharp but smooth incentive for budget
adherence.

\subsubsection*{A.1.4. Sensitivity penalty $P_{\mathrm{sens}} \in [0, 12)$}
Proved in full in Section~A.2 below.

\subsubsection*{A.1.5. Stability component $R_{\mathrm{stab}} \leq 0$}
$R_{\mathrm{stab}} = -2 \cdot \mathrm{std}(\mathcal{H}_{t-1})$, where $\mathcal{H}_{t-1}$
is the previous LLA reward window and the term is set to $0$ when the history is empty.
Since standard deviation is non-negative, $R_{\mathrm{stab}} \leq 0$ always, with equality
when all rewards in the window are identical. It acts as a pure regularizer.

\paragraph{Scale compatibility summary}
The five components span ranges $[0,100]$, $[0,60)$, $(0,20]$, $\leq 0$, and $[0,12)$.
The ensemble weights $(\alpha_j, \beta_j, \gamma_j)$ further modulate the first three
terms and the stability component. No single term is structurally dominant: the maximum
attainable total reward is bounded by
$\alpha_j \cdot 100 + \beta_j \cdot 60 + 20$, and the maximum penalty is
$\gamma_j \cdot |R_{\mathrm{stab}}| + 12$, ensuring meaningful gradient signal
across all components throughout training.

\subsection*{A.2 \; Bounds and Monotonicity of the Sensitivity Penalty}

\begin{proposition}
$P_{\mathrm{sens}} = 12 \cdot s_i \cdot (1 - \bar{\nu}_i) \in [0, 12)$ for all valid
compression configurations, and is jointly monotone increasing in $s_i$ and in
$(1-\bar{\nu}_i)$.
\end{proposition}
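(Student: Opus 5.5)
The plan is to reduce both claims to elementary facts about the two factors $s_i$ and $(1-\bar{\nu}_i)$. I will establish the range of each factor separately and then combine them using the sign and order properties of a product of nonnegative reals.

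\textbf{Range of $s_i$.} By Eq.~\eqref{eq:fisher}, each $\hat{F}(\theta_j)$ is an average of squares, so it is $\geq 0$. Hence every block sum in Eq.~\eqref{eq:sensitivity_score} is nonnegative, and each is at most the maximum over blocks. This gives $s_i \in [0,1]$, provided the denominator is strictly positive. I will state this as a standing assumption: at least one parameter has nonzero empirical gradient on $\mathcal{D}_s$, since otherwise $s_i$ is undefined. This degenerate case is the main subtlety I expect in the bounds.

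\textbf{Range of $1-\bar{\nu}_i$.} By Eq.~\eqref{eq:bits_decode}, every decoded bitwidth satisfies $b_{i,k} \leq b_{\max} \leq b_0 = 32$. By Eq.~\eqref{eq:prune_decode}, $\rho_{i,k} \in (0,1]$, which is guaranteed whenever $\rho_{\max,i}<1$. Therefore $\nu_{i,k}=(b_{i,k}/b_0)\rho_{i,k} \in (0,1]$, as already noted after Eq.~\eqref{eq:compression_ratio} and in A.1.2. Next, $\bar{\nu}_i$ in Eq.~\eqref{eq:global_cr} is a convex combination of the $\nu_{i,k}$ with positive weights $P_{i,k}/P_i$. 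So $\bar{\nu}_i \in (0,1]$, and the lower bound stays strict because the minimum of finitely many positive numbers is positive. Hence $1-\bar{\nu}_i \in [0,1)$.

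\textbf{Combining the factors.} The product of a number in $[0,1]$ and a number in $[0,1)$ is nonnegative and strictly below $1$, since $s_i(1-\bar{\nu}_i) \leq 1-\bar{\nu}_i < 1$. Scaling by $12$ yields $P_{\mathrm{sens}} \in [0,12)$. The value $0$ is attained, for example, at $\bar{\nu}_i=1$.

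\textbf{Monotonicity.} I treat $P_{\mathrm{sens}}$ as the bilinear map $g(x,y)=12xy$ on $[0,1]\times[0,1)$ with $x=s_i$ and $y=1-\bar{\nu}_i$. Its partial derivatives are $\partial g/\partial x = 12y \geq 0$ and $\partial g/\partial y = 12x \geq 0$. So if $x\le x'$ and $y\le y'$, then $g(x,y)\le g(x',y)\le g(x',y')$. This gives joint (coordinatewise) monotonicity in the nondecreasing sense.

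I will add the precise caveat that the monotonicity is strict in $s_i$ exactly when $\bar{\nu}_i<1$, and strict in $(1-\bar{\nu}_i)$ exactly when $s_i>0$. I expect that clarifying what "monotone increasing" means at these boundary cases is the only real obstacle, and I will resolve it by stating the weak form plus these strictness conditions.
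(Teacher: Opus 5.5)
Your proposal is correct and follows essentially the same route as the paper's proof: you bound $s_i\in[0,1]$ via nonnegative Fisher terms and max-normalization, bound $\bar{\nu}_i\in(0,1]$ as a positively weighted average of $\nu_{i,k}\in(0,1]$, multiply the two factors, and read off monotonicity from the partial derivatives $12(1-\bar{\nu}_i)\geq 0$ and $12s_i\geq 0$. Your extra caveats (a strictly positive normalizing denominator in Eq.~\eqref{eq:sensitivity_score}, $\rho_{\max,i}<1$, and the exact conditions for strict monotonicity) are refinements that the paper leaves implicit.
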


\begin{proof}
By definition (Eq.~\eqref{eq:sensitivity_score}), $s_i \in [0,1]$ as it is
max-normalized:\\ 
$$s_i = \sum_{\theta_j \in B_i}\hat{F}(\theta_j) \,/\,
\max_{i'}\sum_{\theta_j \in B_{i'}}\hat{F}(\theta_j),$$
 where $\hat{F}(\theta_j) \geq 0$
(squared gradients), so $s_i \geq 0$, and the numerator is at most the denominator,
so $s_i \leq 1$.

From Eq.~\eqref{eq:compression_ratio}, $\nu_{i,k} = (b_{i,k}/b_0)\cdot\rho_{i,k}$
with $b_{i,k} \leq b_0 = 32$ and $\rho_{i,k} \in (0,1]$, giving $\nu_{i,k} \in (0,1]$.
Since $\bar{\nu}_i$ is an average of such values, $\bar{\nu}_i \in (0,1]$, hence
$(1-\bar{\nu}_i) \in [0,1)$.
Therefore:
\begin{equation}
    P_{\mathrm{sens}} = 12 \cdot \underbrace{s_i}_{\in[0,1]} \cdot
    \underbrace{(1-\bar{\nu}_i)}_{\in[0,1)} \;\in\; [0,\;12)
    \;\subset\; [0,12]
\end{equation}

The upper bound of $12$ is approached as $s_i \to 1$ and $\bar{\nu}_i \to 0$
(maximally sensitive block under maximal compression), but is never attained since
$\bar{\nu}_i > 0$ strictly.

Monotonicity follows immediately from the product form: holding $\bar{\nu}_i$ fixed,
$\partial P_{\mathrm{sens}}/\partial s_i = 12(1-\bar{\nu}_i) \geq 0$; holding $s_i$
fixed, $\partial P_{\mathrm{sens}}/\partial(1-\bar{\nu}_i) = 12 s_i \geq 0$.
Hence $P_{\mathrm{sens}}$ penalizes aggressively compressing sensitive blocks
(high $s_i$, low $\bar{\nu}_i$) and is zero whenever the block is uncompressed
($\bar{\nu}_i = 1$) or sensitivity is zero ($s_i = 0$).
\end{proof}

\noindent\textbf{Remark.} The design is intentional: a block with $s_i = 1.0$
compressed to $\bar{\nu}_i = 0.1$ incurs $P_{\mathrm{sens}} = 12 \times 0.9 = 10.8$,
nearly nullifying $R_{\mathrm{budget}}$'s maximum of $20$ and strongly discouraging
the policy from aggressively compressing critical blocks without a commensurate accuracy
recovery signal.

\subsection*{A.3 \; Compression Ratio: Derivation and Theoretical Bounds}

\paragraph{A.3.1. Derivation of the global CR from per-kernel decisions}
The storage cost of a weight tensor under mixed-precision compression is proportional
to the number of bits required. For kernel $k$ in block $B_i$, with $C_{\mathrm{out}}$
output channels and $P_{i,k}$ total parameters, the compressed storage relative to
FP32 is:
\begin{equation}
    \nu_{i,k} \;=\; \frac{b_{i,k}}{32} \cdot \rho_{i,k}
    \label{eq:cr_derivation}
\end{equation}
where $b_{i,k}/32$ captures the bitwidth reduction and $\rho_{i,k}$ captures the
fraction of channels retained after structured pruning. The accounting assumes a packed
structured representation, or an equivalent materialization pass that removes masked
channels; it intentionally does not claim hardware latency without backend-specific
compilation. Taking the block-level parameter-weighted average over the $K_i$ kernels
in block $B_i$:
\begin{equation}
    \bar{\nu}_i = \frac{1}{P_i} \sum_{k=1}^{K_i} P_{i,k}\nu_{i,k}
    = \frac{1}{P_i} \sum_{k=1}^{K_i} P_{i,k}\frac{b_{i,k}}{32} \cdot \rho_{i,k},
\end{equation}
and the global retained-size fraction, again weighted by FP32 parameter counts:
\begin{equation}
    \bar{\nu} = \frac{1}{P} \sum_{i=1}^{N} P_i\bar{\nu}_i
    = \frac{1}{P} \sum_{i=1}^{N} \sum_{k=1}^{K_i} P_{i,k}
      \frac{b_{i,k}}{32} \cdot \rho_{i,k}
\end{equation}
The global compression ratio is then $\mathrm{CR} = 1/\bar{\nu}$
(Eq.~\eqref{eq:global_cr}).

\paragraph{A.3.2. Theoretical bounds given configured ranges}
Let $b_{\min}, b_{\max}$ and $\rho_{\min}^{\mathrm{prune}}, \rho_{\max}^{\mathrm{prune}}$
be the globally configured bitwidth and pruning bounds. The keep-ratio satisfies
$\rho_{i,k} \in [1 - \rho_{\max}^{\mathrm{prune}},\; 1 - \rho_{\min}^{\mathrm{prune}}]$
(Eq.~\eqref{eq:prune_decode}). Then the per-kernel retained-size fraction is bounded by:
\begin{equation}
    \nu_{i,k}^{\min} = \frac{b_{\min}}{32} \cdot (1 - \rho_{\max}^{\mathrm{prune}}),
    \qquad
    \nu_{i,k}^{\max} = \frac{b_{\max}}{32} \cdot (1 - \rho_{\min}^{\mathrm{prune}})
\end{equation}
Since $\bar{\nu}$ is an average of $\nu_{i,k}$ values, it inherits these bounds:
$\bar{\nu} \in [\nu^{\min}, \nu^{\max}]$, giving achievable CR range:
\begin{equation}
    \mathrm{CR} \;\in\;
    \left[\frac{1}{\nu^{\max}},\;\frac{1}{\nu^{\min}}\right]
    \;=\;
    \left[
        \frac{32}{b_{\max}(1-\rho_{\min}^{\mathrm{prune}})},\;
        \frac{32}{b_{\min}(1-\rho_{\max}^{\mathrm{prune}})}
    \right]
\end{equation}

\noindent\textbf{Instantiation for our experiments.} With
$b_{\min}=4$, $b_{\max}=8$, $\rho_{\min}^{\mathrm{prune}}=0$,
$\rho_{\max}^{\mathrm{prune}}=0.6$:
\begin{equation}
    \nu^{\min} = \frac{4}{32} \cdot 0.4 = 0.05, \qquad
    \nu^{\max} = \frac{8}{32} \cdot 1.0 = 0.25
\end{equation}
\begin{equation}
    \mathrm{CR} \;\in\; \left[\frac{1}{0.25},\; \frac{1}{0.05}\right]
    = [4\times,\; 20\times]
\end{equation}
The empirically achieved effective parameter-storage CRs of $5.99$--$6.72\times$ across
all experiments fall well within this theoretical range, showing that the HLA and LLA
jointly explore a meaningful subspace of the achievable compression frontier rather than
saturating either extreme.

\paragraph{A.3.3. Relationship to the target constraint}
The global constraint $\mathcal{R}(\mathbf{c}) \geq R_{\mathrm{target}}$
(Eq.~\eqref{eq:constraints}) is equivalent to $\bar{\nu} \leq \bar{\nu}^* = 1/R_{\mathrm{target}}$.
For the constraint to be feasible, we require $\bar{\nu}^* \geq \nu^{\min}$, i.e.:
\begin{equation}
    R_{\mathrm{target}} \;\leq\; \frac{32}{b_{\min}(1-\rho_{\max}^{\mathrm{prune}})}
\end{equation}
With our settings this gives $R_{\mathrm{target}} \leq 20$, so the configured target
of $R_{\mathrm{target}} = 4$ ($\bar{\nu}^* = 0.25$) is well within the feasible region.

\section*{B - Hyperparameters Range}
\label{sec:experiments_hyperparams}

\begin{table}[H]
    \caption{Shared hyperparameters across all experiments.}
    \label{tab:hyperparams}
    \begin{center}
        \begin{tabular}{lll}
            \textbf{Hyperparameter} & \textbf{Value} & \textbf{Notes} \\
            \hline \\
            LLA ensemble size        & 3              & 2 PPO + 1 A2C \\
            HLA ensemble size        & 3              & \\
            LLA / HLA training timesteps & 256        & per block / per cycle \\
            Number of cycles ($N_{\text{cycles}}$) & 1--4 & \\
            Max fine-tuning epochs   & 15--20 & patience = 3 \\
            Fine-tuning learning rate & $5 \times 10^{-5}$ & AdamW \\
            Batch size               & 64             & \\
            Quantization bitwidth range & $b \in \{4,\ldots,8\}$ & per kernel \\
            Target CR lower bound & $R_{\mathrm{target}} = 4$ & equivalent retained-size target $\bar{\nu}^*=0.25$ \\
            Surrogate MLP hidden dims & $[64, 32]$    & 6{,}017--11{,}393 params \\
            CNN warm-up pool size    & 50--100 samples    & Experiments 4--5 only \\
        \end{tabular}
    \end{center}
\end{table}

\section*{C - Complete Algorithm}
\label{sec:complete_algorithmic_implementation}

Algorithm~\ref{alg:hirelc} provides a unified formal summary of HiReLC.

\begin{algorithm}[t]
\tiny
\DontPrintSemicolon
\SetAlgoLined
\SetAlgoNlRelativeSize{0}
\SetKwInOut{Input}{Input}
\SetKwInOut{Output}{Output}

\Input{
  Pre-trained $\mathcal{M}(\boldsymbol{\theta})$;\quad
  $\mathcal{G} = (\Delta_{\mathrm{acc}},\, R_{\mathrm{target}},\,
   b_{\min},\, b_{\max},\, \rho_{\min}^{\mathrm{prune}},\, \rho_{\max}^{\mathrm{prune}},\, N_{\mathrm{rep}}^{\min})$;\quad
  \texttt{quantization\_type} $\in$ \{\texttt{mixed}, \texttt{int}, \texttt{float}\};\quad
  \texttt{strategy} $\in$ \{$\emptyset$, \texttt{uniform}, \texttt{log}, \texttt{per-channel}, \texttt{learned}\};\quad
  cycles $T$
}
\Output{Best post-compression fine-tuned parameters $\boldsymbol{\theta}^*$ and configuration $\mathbf{c}^*$}

\textbf{// Phase 0: Initialisation}

Parse $\mathcal{M}$ into $N$ blocks $\{B_i\}_{i=1}^N$, each with $K_i$ kernels\;
Fine-tune $\rightarrow \boldsymbol{\theta}_0$;\quad
$\mathcal{A}_{\mathrm{base}} \leftarrow \mathrm{Evaluate}(\boldsymbol{\theta}_0)$\;
Compute $\{s_i\}$ via Eqs.~\eqref{eq:fisher}--\eqref{eq:sensitivity_score}\;
Init surrogate MLP $\hat{f}$;\quad
$\mathcal{B}_{\mathrm{rep}} \leftarrow \emptyset$\;

\textbf{// Phase 1: Agent Initialisation}

Init $n_h$ HLA agents $\{\pi_h^{\mathrm{HLA}}\}$ (PPO/A2C) with heterogeneous weights;\quad pre-train on simulated env\;
Init $n_a$ LLA agents $\{\pi_{i,j}^{\mathrm{LLA}}\}$ per block with heterogeneous weights;\quad
$w_j \leftarrow 1/n_a\;\forall j$\;

\textbf{// Phase 2: Active Hierarchical Compression Loop}

$\mathbf{c}^* \leftarrow \emptyset$;\quad $\mathcal{A}^* \leftarrow 0$\;
$\Delta\mathcal{A}^{(0)} \leftarrow 0$;\quad $\bar{\nu}^{(0)} \leftarrow 1$\;

\For{$t \leftarrow 1$ \KwTo $T$}{

  \tcp{HLA: update feedback, optionally re-adapt, allocate budgets}
  Update HLA states with $(\Delta\mathcal{A}^{(t-1)},\; \bar{\nu}^{(t-1)},\; t/T)$\;
  \lIf{$t > 1$}{re-train HLA ensemble for a short burst}

  \For{each HLA agent $h$}{
    \lIf{$t = T$}{$\mathbf{a}_h \leftarrow \arg\max\,\pi_h^{\mathrm{HLA}}(\cdot|\mathbf{s})$ \emph{(exploit)}}
    \lElse{$\mathbf{a}_h \sim \pi_h^{\mathrm{HLA}}(\cdot|\mathbf{s})$ \emph{(explore)}}
  }
  \For{each block $B_i$}{
    $(R_i,\,\rho_{\max,i},\,b_{\min,i},\,p_i,\,\Delta_{\mathrm{acc},i})
      \leftarrow \tfrac{1}{n_h}\sum_h \mathrm{DecodeNumerical}(\mathbf{a}_h, i)$\;
    $\sigma_i \leftarrow \operatorname{mode}\!\left(\{\mathrm{DecodeStrategy}(\mathbf{a}_h, i)\}_h\right)$\;
    \lIf{$s_i > 0.7$}{adjust $(R_i,\rho_{\max,i},b_{\min,i})$ one tier lighter}
    \lElseIf{$s_i < 0.3$}{adjust $(R_i,\rho_{\max,i},b_{\min,i})$ one tier more aggressive}
    $\mathcal{B}_i^{(t)} \leftarrow \mathrm{LayerBudget}(R_i,\rho_{\max,i},b_{\min,i},\sigma_i,s_i,\Delta_{\mathrm{acc},i})$\;
  }

    \tcp{LLA: train agents per block, extract configurations}
  \For{each block $B_i$}{
    Cache $\mathbf{z}_{\mathrm{base}} \leftarrow \mathcal{M}(\mathcal{X}_c;\boldsymbol{\theta}_0)$;\quad
    store $\mathbf{W}_{i,k}^{(0)} \leftarrow \mathbf{W}_{i,k}\;\forall k$\;

    \For{each LLA agent $j$}{
      \For{each env step (max 20)}{
        Sample $\mathbf{a}_t \sim \pi_{i,j}(\cdot|\mathbf{s}_t)$;\quad
        decode candidate block configuration $c_i(\mathbf{a}_t)$ with $(b_{i,k},\rho_{i,k})_{k=1}^{K_i}$\;
        Set $\tau_{i,k}$: forced if \texttt{quantization\_type} $\neq$ \texttt{mixed}; else agent choice\;
        Set $\mu_{i,k}$: forced if \texttt{strategy} set; else agent choice\;
        Apply $\mathcal{C}$: prune (Eq.~\ref{eq:pruning_mask}), then quantize (Eq.~\ref{eq:quantization})\;
        \eIf{$|\mathcal{B}_{\mathrm{rep}}| \geq N_{\mathrm{rep}}^{\min}$}{
          $R_{\mathrm{acc}} \leftarrow \phi(\mathcal{A}_{\mathrm{base}} - \hat{f}(c_i(\mathbf{a}_t)))$
          \hfill\emph{surrogate}
        }{
          $R_{\mathrm{acc}} \leftarrow \phi(\mathrm{MSE}_{\mathrm{logit}})$ \hfill\emph{Eq.~\ref{eq:logit_mse}}
        }
        Compute $r_t^{\mathrm{LLA}}$ (Eq.~\ref{eq:lla_reward});\quad store transition;\quad
        restore $\mathbf{W}_{i,k} \leftarrow \mathbf{W}_{i,k}^{(0)}$\;
      }
      Update $\pi_{i,j}$ via PPO or A2C;\quad clip gradient norms\;
    }

    $c_i^{(t)} \leftarrow
      \operatorname{round}(\sum_j w_j\,\mathbf{a}_{i,j}^*)$;\quad
    apply type/granularity overrides
    $\Rightarrow \{b_{i,k},\rho_{i,k},\tau_{i,k},\mu_{i,k}\}_{k=1}^{K_i}$
    \hfill\emph{(Eq.~\ref{eq:voting})}\;
  }
  $\mathbf{c}^{(t)} \leftarrow \{c_i^{(t)}\}_{i=1}^N$\;

    \tcp{Apply compression, fine-tune, update surrogate, track best}
  $\boldsymbol{\theta}^{(t)} \leftarrow \mathcal{C}(\boldsymbol{\theta}_0,\mathbf{c}^{(t)})$
  \hfill\emph{(Eqs.~\ref{eq:pruning_mask},\,\ref{eq:quantization})}\;
  Fine-tune $\boldsymbol{\theta}^{(t)}$ via AdamW; re-apply masks each step; patience early stopping\;
  $\mathcal{A}^{(t)} \leftarrow \mathrm{Evaluate}(\boldsymbol{\theta}^{(t)})$\;
  $\mathcal{B}_{\mathrm{rep}} \leftarrow \mathcal{B}_{\mathrm{rep}} \cup \{(\mathbf{c}^{(t)}, \mathcal{A}^{(t)})\}$;\quad
  re-train $\hat{f}$ if $|\mathcal{B}_{\mathrm{rep}}| \geq N_{\mathrm{rep}}^{\min}$
  \hfill\emph{(Eq.~\ref{eq:surrogate_loss})}\;
  \If{$\mathcal{A}^{(t)} > \mathcal{A}^*$}{
    $\mathbf{c}^* \leftarrow \mathbf{c}^{(t)}$;\quad
    $\boldsymbol{\theta}^* \leftarrow \boldsymbol{\theta}^{(t)}$;\quad
    $\mathcal{A}^* \leftarrow \mathcal{A}^{(t)}$\;
  }
  $\Delta\mathcal{A}^{(t)} \leftarrow \mathcal{A}_{\mathrm{base}} - \mathcal{A}^{(t)}$;\quad
  $\bar{\nu}^{(t)} \leftarrow P^{-1}\sum_i P_i\bar{\nu}_i^{(t)}$
  \hfill\emph{(Eq.~\ref{eq:global_cr})}\;
}

\Return $(\boldsymbol{\theta}^*,\mathbf{c}^*)$\;

\caption{HiReLC: Hierarchical RL Framework for Neural Network Compression}
\label{alg:hirelc}
\end{algorithm}

\end{document}